\newtheorem{theorem}{Theorem}
\newtheorem{proof}{Proof}
\definecolor{Gray}{gray}{0.9}
\crefname{figure}{Fig.}{Figs.}
\crefname{table}{Tab.}{Tabs.}
\title{Preserve and Sculpt: Manifold-Aligned Fine-tuning of Vision-Language Models for Few-Shot Learning}
\author {
    Dexia Chen\textsuperscript{\rm 1},
    Qianjie Zhu\textsuperscript{\rm 2},
    Weibing Li\textsuperscript{\rm 1},
    Yue Yu\textsuperscript{\rm 3},
    Tong Zhang\textsuperscript{\rm 3},
    Ruixuan Wang\textsuperscript{\rm 1}
}
\begin{document}

\maketitle

\begin{abstract}
Pretrained vision-language models (VLMs), such as CLIP, have shown remarkable potential in few-shot image classification and led to numerous effective transfer learning strategies. 
These methods leverage the pretrained knowledge of VLMs to enable effective domain adaptation while mitigating overfitting through parameter-efficient tuning or instance-based consistency constraints. 
However, such regularizations often neglect 
the geometric structure of data distribution,
which may lead to distortion of the overall semantic representation.
To overcome this limitation, we propose a novel fine-tuning method, \textbf{M}anifold-\textbf{P}reserving and \textbf{S}culpting Tuning (\textbf{MPS-Tuning}). 
Regarding the data distribution in feature space as a semantic manifold, 
MPS-Tuning explicitly constrains the intrinsic geometry of this manifold while further sculpting it to enhance class separability. Specifically, MPS-Tuning preserves both macroscopic and microscopic topological structures of the original manifold by aligning Gram matrices of features before and after fine-tuning. Theoretically, this constraint is shown to approximate an upper bound of the Gromov-Wasserstein distance. 
Furthermore, features from the image and text modalities are paired, and pairwise similarities are optimized to enhance the manifold’s class discriminability.
Extensive experiments demonstrate that MPS-Tuning significantly improves model performance while effectively preserving the structure of the semantic manifold. The code will be released.
\end{abstract}

\section{Introduction}

Vision-language models (VLMs), exemplified by CLIP~\cite{CLIP}, have made significant progress by training on massive image-text pairs using contrastive learning. These models create joint embedding spaces where images and texts with similar meanings are well aligned. %
A compelling example is how the visual representation of a ``cat'' becomes positioned near the textual representation of ``feline'' but far from semantically distant concepts like ``truck''. This intuitive structure of the embedding space directly contributes to the models' exceptional ability to generalize across diverse tasks.

However, preserving this intricate semantic structure presents significant challenges during task adaptation, especially in few-shot learning scenarios.
Standard fine-tuning approaches exhibit a tendency toward 
semantic structure collapse
, where limited training samples cause catastrophic forgetting of pre-trained representations, ultimately manifesting as severe degradation in generalization performance.

To address these challenges, two main paradigms of approaches have been proposed (\cref{fig:comp}). The first paradigm encompasses parameter-efficient fine-tuning (PEFT) methods, including prompt-based techniques such as CoOp~\cite{CoOp} and adapter-based frameworks like CLIP-Adapter~\cite{Clip-adapter}, which mitigate overfitting by constraining the number of trainable parameters. The second paradigm comprises consistency-driven approaches, such as PromptSRC~\cite{promptsrc}, which enforce consistency between the features or logits of individual samples before and after fine-tuning.
Despite the demonstrated efficacy of these approaches, they either rely on implicit regularization of few-parameter fine-tuning, which limits model flexibility, or restrict variations in individual sample representations, which neglects the preservation of pretrained model's semantic structure.

\begin{figure}
    \centering
    \includegraphics[width=0.49\textwidth]{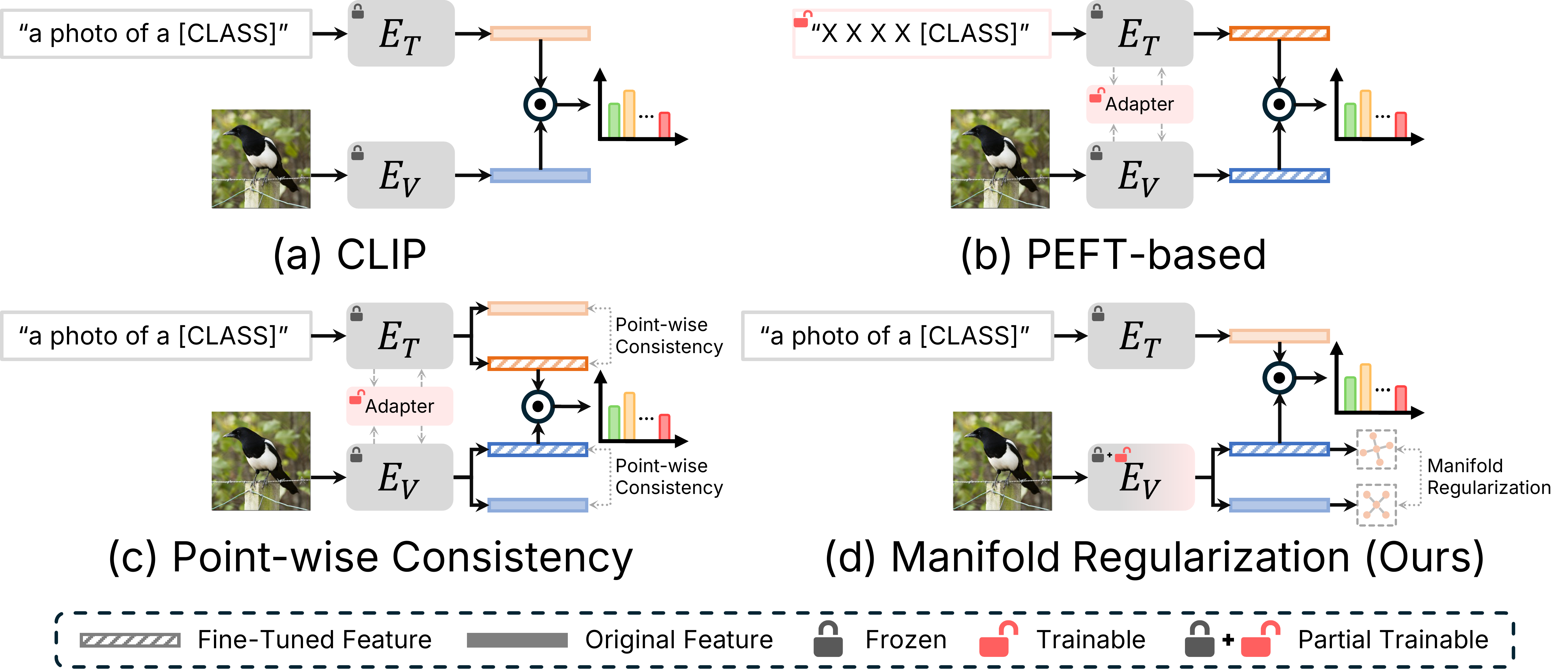}
    \caption{
      Comparison of regularization paradigms for VLMs. Previous fine-tuning methods (b, c) primarily constrain the number of tunable parameters or apply point-wise consistency constraints, potentially limiting the model's learning capacity while neglecting the complete semantic manifold structure. In contrast, our method explicitly preserves the semantic manifold structure, significantly enhancing both generalization and learning capabilities.
    }
    \label{fig:comp}
\end{figure}

In contrast to existing methods that treat image data as isolated points, we propose Manifold-Preserving and Sculpting Tuning (MPS-Tuning), 
which views data distribution in the feature spaces 
as continuous semantic manifold, and aims to enhance 
its
discrimination for downstream tasks while maintaining the intrinsic manifold structure.
To preserve the manifold structure, we constrain the Gromov-Wasserstein (GW) distance~\cite{memoli2011gromov} between the semantic manifolds derived from the features distributions of the fine-tuned and original models during training.
Since directly computing GW distance is NP-hard and impractical for optimization, we simplify this problem and theoretically prove that the $L_p$-norm of the difference between corresponding Gram matrices provides an upper bound approximation to the GW distance of order $p$. Based on this theoretical insight, we propose Manifold Alignment Regularization, which preserves global topological structure via batch-level Gram matrices and maintains local geometric structure through token-level Gram matrices.
For manifold sculpting, we introduce Hierarchical Manifold Sculpting with a multimodal query-support matching task, where each query achieves higher similarity with same-category pairs and lower similarity with different-category pairs. 
This sculpting mechanism is extended from the model's output features to its intermediate layer features, further enhancing the discrimination of the manifold.
Through manifold alignment and sculpting, robust adaptation of vision-language models is effectively achieved.

Our main contributions are summarized as follows:
\begin{itemize}
    \item We propose a novel few-shot fine-tuning framework called MPS-Tuning, which enhances model performance while alleviating overfitting by explicitly aligning and sculpting the manifold geometry.
    \item We design a new regularization method, called Manifold Alignment Regularization, and establish its theoretical connection to the Gromov-Wasserstein distance for the first time, offering deeper insights into the preservation of manifold geometry.
    \item We introduce a hierarchical optimization strategy called Hierarchical Manifold Sculpting to actively enhance the discriminability of the manifold and further improve model performance.
    \item We evaluated the method's performance on 11 datasets and conducted additional generalization evaluation on two datasets. Experimental results demonstrate that our method significantly outperforms current state-of-the-art approaches in few-shot image classification tasks.
\end{itemize}

\section{Related Work}

\subsection{Vision-Language Models}
In recent years, vision-language models~\cite{CLIP, evaclip, metaclip, softclip, StructureCLIP, siglip, siglip2, hycoclip} pretrained via contrastive learning on large-scale image-text pairs have demonstrated strong zero-shot generalization capabilities by aligning correctly matched images and texts.
This enables them to be directly applied to various downstream tasks. Typically, such models consist of an image encoder $E_{V}$ and a text encoder $E_{T}$. For a given classification task, text prompts (e.g., ``a photo of a \{class\}") are first constructed for each class, and a set of normalized class-specific textual features $\{\textbf{t}_{1},...,\textbf{t}_{K}\}$ is extracted using the text encoder $E_{T}$, where $K$ denotes the number of classes. Subsequently, for an input image $\textbf{x}$, its  normalized visual feature representation $\textbf{z}$ is obtained through the image encoder $E_{V}$. The probability of the image belonging to each class is then computed by applying the softmax function to the cosine similarities between $\textbf{z}$ and each class text feature $\textbf{t}_{k}$, i.e., 
\begin{equation}
P(y=c_{k}|\mathbf{x}) = \frac{\exp(\langle \mathbf{z}, \mathbf{t}_{k} \rangle /\tau)}{\sum_{j=1}^{K}\exp(\langle \mathbf{z}, \mathbf{t}_{j} \rangle /\tau)} \,,
\end{equation}
where $\tau$ is a learnable temperature coefficient, and $ \langle \cdot, \cdot \rangle $ represents the inner product between two vectors. %
The model's final prediction is given by the class with the highest probability, i.e.,
\begin{equation}
\hat{y} = {\arg\max}_{c_{k} \in \{c_{1},...,c_{K}\}} P(y=c_{k}|x)
\end{equation}

\subsection{Efficient Transfer Learning}
To enable efficient adaptation of VLMs to downstream tasks while alleviating overfitting, a range of transfer learning strategies~\cite{yang2024mma, progard, xie2024textrefiner, Khattak2024ProText, calip, awt2024} have been proposed.
These approaches can be grouped into two main paradigms.
The first paradigm consists of PEFT-based methods that minimize computational overhead through targeted parameter updates. This includes prompt-based approaches that learn continuous, task-specific prompt vectors on the textual side, such as CoOp~\cite{CoOp} and CoCoOp~\cite{CoCoOp}, and adapter-based methods, including CLIP-Adapter~\cite{Clip-adapter}, where visual features are refined via a lightweight adapter, and Tip-Adapter~\cite{tip-adapter}, where classification is performed through the construction of a feature cache.
The second paradigm is consistency-driven methods that incorporate consistency constraints to alleviate overfitting during fine-tuning. For example, PromptSRC~\cite{promptsrc} employs an L1 loss at the feature level and a KL divergence~\cite{kl} at the logit level to maintain pre-trained knowledge. 
While these methods demonstrate impressive performance on few-shot tasks, their flexibility is constrained by dependencies on PEFT strategies and point-based constraints.
In contrast, our proposed manifold alignment regularization enables enhanced model adaptability while achieving superior preservation of pre-trained knowledge.

\begin{figure*}
    \centering
    \includegraphics[width=0.8\textwidth]{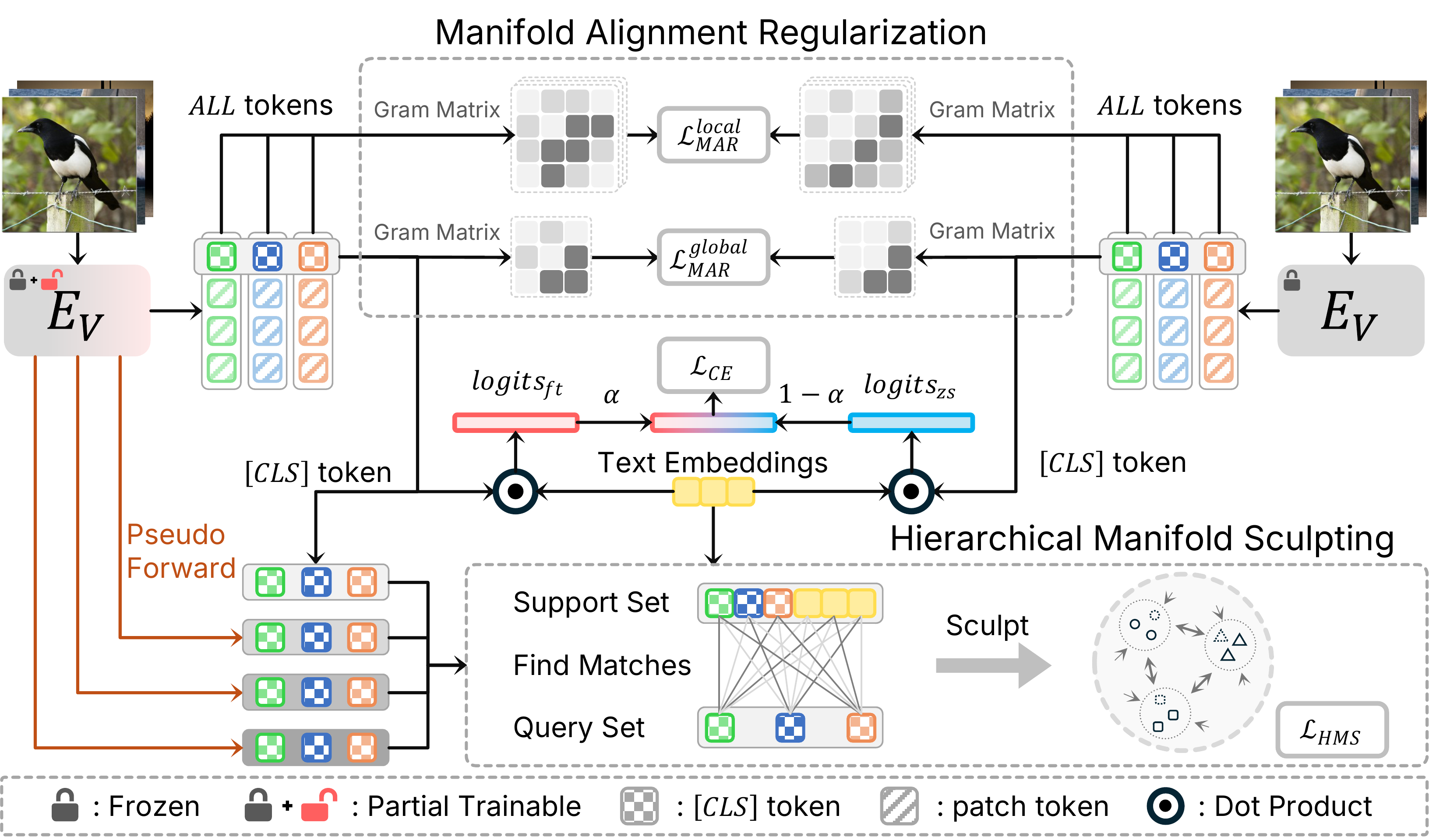}
    \caption{
      Overview of the MPS-Tuning, which integrates Manifold Alignment Regularization and Hierarchical Manifold Sculpting.
      Manifold Alignment Regularization prevents knowledge degradation by aligning Gram matrices across fine-tuned and original CLIPs at both global and local scales.
      Hierarchical Manifold Sculpting enhances local manifold adaptability via query-support matching, tailoring representations to downstream tasks. Through Pseudo Forward, this sculpting process extends to intermediate layers, ensuring effective semantic-task manifold alignment. $E_V$ represents the visual encoder.
    }
    \label{fig:overview}
\end{figure*}

\subsection{Gromov-Wasserstein Distance}
The Gromov-Wasserstein (GW) distance~\cite{memoli2011gromov} compares the intrinsic geometric structures of two metric spaces. Its core idea is to match the internal distance relationships between points rather than the points themselves. 
We consider two discrete matric spaces 
$\mathbf{D}_{\mathcal{X}}$ and $\mathbf{D}_{\mathcal{Y}}$, 
and their associated probability distributions $\mu$ and $\nu$. 
The GW distance seeks an optimal \textit{coupling} $\pi$, which acts as a probabilistic transportation plan between the distributions $\mu$ and $\nu$. 
The discrete GW distance is formally defined as:
\begin{equation}
\small
\label{eq:gw_discrete}
GW_{p}(\mu, \nu) = \left( \min_{\pi \in \Pi(\mu, \nu)} \sum_{i,j=1}^{n} \sum_{k,l=1}^{m} |(\mathbf{D}_{\mathcal{X}})_{ij} - (\mathbf{D}_{\mathcal{Y}})_{kl}|^p \pi_{ik} \pi_{jl} \right)^{1/p} \,
\end{equation}
This formula minimizes the expected $p$-power difference between pairwise distances in the two spaces, under the optimal coupling $\pi$. By focusing on the similarity of distance matrices rather than direct point correspondence, GW distance effectively captures geometric structures. However, finding the optimal coupling $\pi$ is a quadratic assignment problem, which is NP-hard and thus computationally intractable for high-dimensional data. To address this, we adopt a fixed coupling scheme, which provides a tractable upper bound on the GW distance and transforms it into a computationally feasible regularization. To the best of our knowledge, this work represents the first application of GW distance theory to VLM fine-tuning, offering a principled approach to preserving geometric knowledge in pretrained models. More details are in Appendix A.

\section{Method}
To facilitate effective downstream adaptation without disrupting the inherent structure of the pre-trained representation manifold, 
we propose a novel approach termed Manifold-Preserving and Sculpting Tuning (MPS-Tuning).
This method employs Manifold Alignment Regularization (MAR) to prevent drastic alterations in the 
semantic structure of feature manifold 
and incorporates Hierarchical Manifold Sculpting (HMS) to progressively refine local manifold structures. 
Specifically, MAR aligns the Gram matrices of the fine-tuned and the original models at both the batch and token levels, thereby maintaining consistency in semantic geometry and mitigating overfitting risks. 
In parallel, HMS refines local manifold structures by performing a multimodal query-support matching task between image and text representations, optimizing similarity at both intermediate and 
output feature
levels. This results in more compact intra-class clusters and better separated inter-class distributions.
By jointly applying MAR and HMS, MPS-Tuning achieves robust and efficient adaptation to new tasks, maintaining the valuable structural knowledge of the pre-trained model and demonstrating strong performance in few-shot learning scenarios.

\subsection{Manifold Alignment Regularization}
The feature distribution learned by a pre-trained model 
can be regarded as a well-structured semantic manifold, whose geometric structure encodes rich prior knowledge. Preserving this geometric structure allows for the retention of more comprehensive pre-trained knowledge. To this end, we propose Manifold Alignment Regularization (MAR), which enforces alignment between the geometric structures of feature manifolds before and after fine-tuning, thereby enhancing model performance.

As a metric designed to quantify the similarity between different metric spaces, the Gromov-Wasserstein (GW) distance serves as a powerful tool for evaluating changes in the structure of feature manifolds induced by model fine-tuning. Our MAR provides an efficient upper-bound approximation to the GW distance. To formally justify this approximation, we present the following theorem:

\begin{theorem}
The alignment of the Gram matrices under the $L_p$-norm serves as an approximate upper bound of the $p$-order Gromov-Wasserstein distance.
\end{theorem}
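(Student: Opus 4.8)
The plan is to bypass the NP-hard minimization over couplings in \cref{eq:gw_discrete} by evaluating the GW objective at a single, canonical coupling, and then to rewrite the resulting entrywise matrix discrepancy in terms of Gram matrices using the fact that the compared features are $L_2$-normalized. Concretely, the two feature sets being compared — those produced by the original encoder and by the fine-tuned encoder on the same mini-batch of $n$ inputs — share an index set, so the diagonal coupling $\pi^{\star}_{ik} = \tfrac1n\,\delta_{ik}$ belongs to $\Pi(\mu,\nu)$ whenever $\mu=\nu$ is the empirical (uniform) measure. Since $GW_p$ is a minimum over $\Pi(\mu,\nu)$, substituting $\pi^{\star}$ gives, with no approximation yet,
\begin{equation}
GW_p(\mu,\nu) \;\le\; \Big( \tfrac1{n^2}\sum_{i,j=1}^{n} |(\mathbf{D}_{\mathcal{X}})_{ij} - (\mathbf{D}_{\mathcal{Y}})_{ij}|^{p} \Big)^{1/p} \;=\; \tfrac1{n^{2/p}}\,\|\mathbf{D}_{\mathcal{X}} - \mathbf{D}_{\mathcal{Y}}\|_{p},
\end{equation}
where $\|\cdot\|_p$ denotes the entrywise $\ell_p$ norm. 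This removes the quadratic-assignment obstacle; what remains is to pass from distance matrices to Gram matrices.

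For the second step I would use normalization: for unit-norm features $(\mathbf{D}_{\mathcal{X}})_{ij} = \|\mathbf{z}_i-\mathbf{z}_j\| = \sqrt{2-2(\mathbf{G}_{\mathcal{X}})_{ij}}$, and similarly for $\mathcal{Y}$, where $\mathbf{G}_{\mathcal{X}},\mathbf{G}_{\mathcal{Y}}$ are the Gram (cosine-similarity) matrices. Writing $\phi(g)=\sqrt{2-2g}$, the mean value theorem gives $|(\mathbf{D}_{\mathcal{X}})_{ij}-(\mathbf{D}_{\mathcal{Y}})_{ij}| = |\phi'(\xi_{ij})|\,|(\mathbf{G}_{\mathcal{X}})_{ij}-(\mathbf{G}_{\mathcal{Y}})_{ij}|$ for some $\xi_{ij}$ between the two similarity values. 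Provided these similarities stay bounded away from $1$ — a mild non-degeneracy condition, since fine-tuning perturbs Gram entries only slightly and distinct inputs are not collinear — there is a fixed constant $C$ with $|(\mathbf{D}_{\mathcal{X}})_{ij}-(\mathbf{D}_{\mathcal{Y}})_{ij}| \le C\,|(\mathbf{G}_{\mathcal{X}})_{ij}-(\mathbf{G}_{\mathcal{Y}})_{ij}|$. Plugging this into the previous display yields
\begin{equation}
GW_p(\mu,\nu) \;\le\; \tfrac{C}{n^{2/p}}\,\|\mathbf{G}_{\mathcal{X}} - \mathbf{G}_{\mathcal{Y}}\|_{p},
\end{equation}
so the $L_p$ Gram-alignment term is, up to a constant that is fixed during training, an upper bound on the $p$-order GW distance; driving it down therefore controls the geometric drift of the manifold. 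The same argument applies verbatim with the index set taken to be tokens within an image, giving the local (token-level) version.

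The main obstacle is exactly this second step: $\phi'(g) = -1/\sqrt{2-2g}$ blows up as $g\to1$, so the Lipschitz constant $C$ is not universal and the bound degrades if two features nearly coincide. This is why the statement is phrased as an \emph{approximate} upper bound — one either restricts to batches satisfying the non-degeneracy condition above, or replaces the exact inequality by the first-order expansion $\phi((\mathbf{G}_{\mathcal{X}})_{ij}) - \phi((\mathbf{G}_{\mathcal{Y}})_{ij}) \approx \phi'((\mathbf{G}_{\mathcal{X}})_{ij})\big((\mathbf{G}_{\mathcal{X}})_{ij} - (\mathbf{G}_{\mathcal{Y}})_{ij}\big)$, which is tight precisely in the small-perturbation regime relevant to fine-tuning. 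The remaining points are routine bookkeeping: verifying $\pi^{\star}\in\Pi(\mu,\nu)$, identifying the two point sets (immediate, since the two encoders act on the same inputs under the uniform empirical measure), and noting that $n^{-2/p}$ and $C$ are constants, so minimizing $\|\mathbf{G}_{\mathcal{X}}-\mathbf{G}_{\mathcal{Y}}\|_p$ is what operationally matters.
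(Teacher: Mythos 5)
Your proposal is correct in outline, and its first half coincides exactly with the paper's argument: both evaluate the GW objective at the fixed diagonal coupling $\pi_{ik} = \frac{1}{N}\delta_{ik}$ (admissible because both marginals are the uniform empirical measure on the same index set) and thereby replace the NP-hard minimization by an exact upper bound in terms of the entrywise $\ell_p$ discrepancy of the two intra-space distance matrices. You diverge at the second step, where the distance matrices must be related to Gram matrices. The paper takes the underlying ``metric'' to be the cosine distance $d(a,b)=1-\langle a,b\rangle$, so that $(D_Z)_{ij}-(D_{Z'})_{ij}=S'_{ij}-S_{ij}$ holds as an identity; the passage to Gram matrices is then exact, no constant appears, and the only looseness in the whole derivation is the choice of coupling. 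You instead take the Euclidean metric on the unit sphere, $(\mathbf{D}_{\mathcal{X}})_{ij}=\sqrt{2-2(\mathbf{G}_{\mathcal{X}})_{ij}}$, and invoke the mean value theorem, which costs you a batch-dependent Lipschitz constant $C=\sup_{\xi}|\phi'(\xi)|$ that degrades whenever two compared features become nearly collinear --- a situation that can genuinely occur here (near-duplicate or augmented views in a batch, and neighboring patch tokens in the local, token-level version). Each route buys something: the paper's choice yields a clean, constant-free bound, at the price of using a ``distance'' that is not a true metric (cosine distance violates the triangle inequality), so what is bounded is a GW-type discrepancy rather than the GW distance between genuine metric spaces; your choice keeps an honest metric but delivers only a conditional, constant-inflated bound, which arguably gives a more faithful account of why the theorem says ``approximate.'' To recover the paper's one-line version, simply redefine the metric as $1-\langle\cdot,\cdot\rangle$ and your entire second step collapses.
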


\noindent \textit{Proof Outline.} We consider the feature spaces of the original CLIP model and the fine-tuned CLIP model as two metric spaces. By fixing a natural coupling (i.e., assuming a one-to-one correspondence between features of the same sample across the two models), the NP-hard computation of the GW distance is reduced to an efficient upper-bound approximation. Specifically, the approximate upper bound of the discrete ${GW}_p$ distance between the two metric spaces is given by the $L_p$-norm of the difference between their respective Gram matrices. Refer to Appendix A for the complete proof.

Guided by this theory, alignment regularizations are introduced at two distinct levels.

\noindent\textbf{Global topological alignment} 
To maintain the integrity of the global manifold structure, relational constraints among samples are enforced at the batch level.
Given a mini-batch of $N$ samples, we extract normalized [CLS] token features from the pre-trained model as 
$\{ \mathbf{z}_1, \dots, \mathbf{z}_N \} \in \mathbb{R}^{N\times d}$
and from the fine-tuned model as $\{ \mathbf{z'}_1, \dots, \mathbf{z'}_N \} \in \mathbb{R}^{N\times d}$. The Gram matrices $\mathbf{S}, \mathbf{S'} \in \mathbb{R}^{N \times N}$ are computed via inner products $S_{ij} = \langle \mathbf{z}_i, \mathbf{z}_j \rangle$ and $S'_{ij} = \langle \mathbf{z'}_i, \mathbf{z'}_j \rangle$. The global topological alignment loss is defined as
\begin{equation}
\mathcal{L}^{\text{global}}_{\text{MAR}} = \frac{1}{N^2} \sum_{i=1}^N \sum_{j=1}^N |S_{ij} - S'_{ij}|_1 \,.
\end{equation}

\noindent\textbf{Local geometric alignment} 
To retain the internal geometric structure within individual samples, regularization is separately performed on the interactions between the [CLS] token and the patch tokens, as well as on the internal relations among the patch tokens.
For the $i$-th sample, we collect features before fine-tuning as $\{ \mathbf{z}^{(i)}_{\text{cls}}, \mathbf{z}^{(i)}_1, \dots, \mathbf{z}^{(i)}_M \} \in \mathbb{R}^{(M+1)\times d}$ and after fine-tuning as $\{ \mathbf{z'}^{(i)}_{\text{cls}}, \mathbf{z'}^{(i)}_1, \dots, \mathbf{z'}^{(i)}_M \} \in \mathbb{R}^{(M+1)\times d}$, where $M$ denotes the number of patch tokens. The intra-sample Gram matrices $\mathbf{S}^{\text{intra}}_i, \mathbf{S'}^{\text{intra}}_i \in \mathbb{R}^{(M+1) \times (M+1)}$ are computed using inner products among [CLS] and patch tokens. The local alignment loss is given by
\begin{equation}
\mathcal{L}^{\text{local}}_{\text{MAR}} = \frac{1}{N} \sum_{i=1}^{N} \left( \frac{1}{(M+1)^2} \sum_{k=0}^{M} \sum_{l=0}^{M} \left| S^{\text{intra}}_{i,kl} - {S'}^{\text{intra}}_{i,kl} \right|_1 \right) \,.
\end{equation}
The final manifold alignment regularization term is the sum of the above two terms, i.e.,
\begin{equation}
\mathcal{L}_{\text{MAR}} = \mathcal{L}_{\text{MAR}}^{\text{global}} + \mathcal{L}_{\text{MAR}}^{\text{local}} \,.
\end{equation}

\begin{figure}
    \centering
    \includegraphics[width=0.49\textwidth]{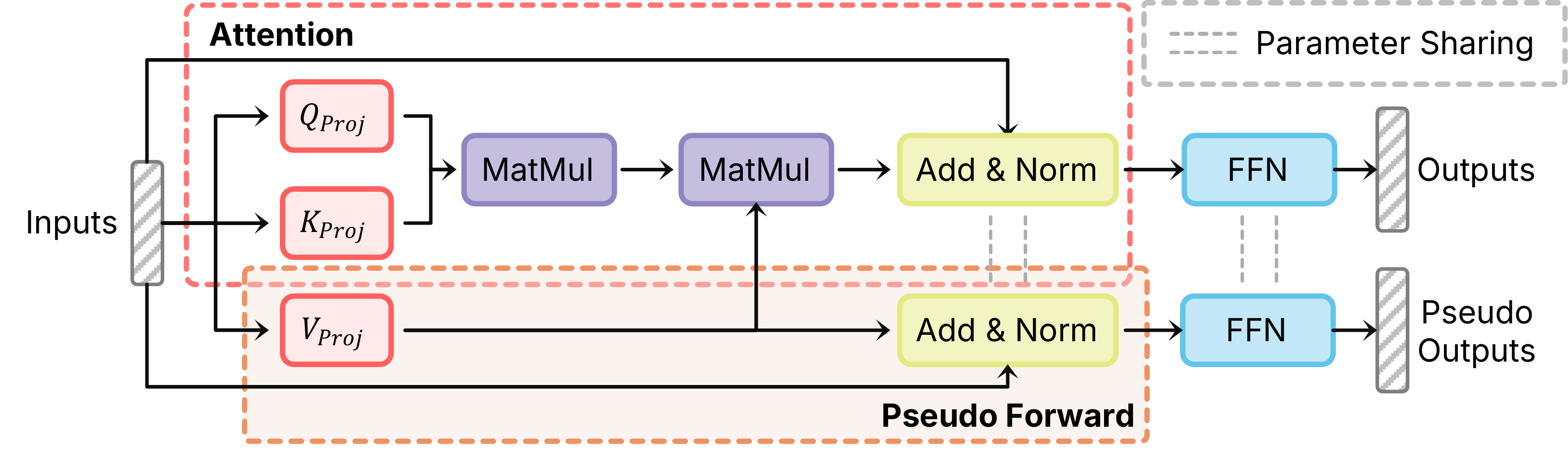}
    \caption{
      Pseudo-Forward projection bypass the attention allocation component in the model, and map intermediate layer features to the output feature space.
    }
    \label{fig:pforward}
\end{figure}

\subsection{Hierarchical Manifold Sculpting}

To facilitate the acquisition of task-specific knowledge, we propose a hierarchical optimization of the local feature manifold. This process is formulated as a query-support matching task, which aims to encourage high similarity for positive image-text or image-image pairs and discourage incorrect matches. 
Let $\mathcal{Q} = \{ \mathbf{q}_1, \mathbf{q}_2, \dots, \mathbf{q}_N \}$ be the set of normalized image features used as queries, and let $\mathcal{T} = \{ \mathbf{t}_1, \mathbf{t}_2, \dots, \mathbf{t}_K \}$ be the set of frozen text embeddings. The support set is then defined as the union $\mathcal{S} = \mathcal{Q} \cup \mathcal{T}$.
Positive matches are defined based on category identity. Since limited batch sizes may lead to missing visual positives, 
data augmentation is applied to generate two augmented views per 
image, enriching the image pool.
The task
is optimized via a sculpting loss, which applies contrastive learning between each query and its positive matches:
\begin{equation}
\mathcal{L} _ { sculpt } ^ {query} (\mathbf{q}, \mathcal{S}) = - \frac{1}{|\mathcal{P}_\mathbf{q}|} \sum_{\mathbf{s} \in {\mathcal{P}_\mathbf{q}}} \log {\frac{\exp(\langle \mathbf{q}, \mathbf{s} \rangle) / \tau^\prime}{\sum_{\mathbf{s}^\prime \in \mathcal{S} \backslash \mathbf{q}}{\exp(\langle \mathbf{q}, \mathbf{s}^\prime \rangle) / \tau^\prime}}}\,,
\end{equation}
where $\tau ^ { \prime }$ is a temperature factor and $\mathcal{P}_q$  represents the set of samples from $\mathcal{S}$ that are positively paired with the query $\mathbf{q}$. 
The final objective over the batch is obtained by averaging this loss across all queries:
\begin{equation}
\mathcal{L} _ { sculpt }(\mathcal{Q}, \mathcal{S}) = \mathbb{E}_{\mathbf{q}\in \mathcal{Q}} [\mathcal{L} _ { sculpt } ^ {query} (\mathbf{q}, \mathcal{S})] \,,
\end{equation}

Furthermore, the manifold refinement is extended to intermediate transformer blocks to further sculpt the manifold. However, intermediate features $\mathbf{z'}^{(l)}$ are not 
compatible with text embeddings. 
To address this issue, we implement a pseudo-forward projection (\cref{fig:pforward}), skipping attention modules while retaining essential transformations
in model
, i.e.,
\begin{equation}
\hat{\mathbf{z}}'^{(l)} = \text{FFN}^{(L)} \circ V_{\text{Proj}}^{(L)} \circ \cdots \circ \text{FFN}^{(l+1)} \circ V_{\text{Proj}}^{(l+1)}(\mathbf{z'}^{(l)}).
\end{equation}

The overall HMS loss thus aggregates both final and intermediate layer alignment as follows,
\begin{equation}
\mathcal{L}_{\text{HMS}} = \mathcal{L}_{sculpt}(\hat{\mathcal{Q}}, \hat{\mathcal{S}}) + \sum_{l \in L_{\text{blocks}}} \mathcal{L}_{sculpt}(\mathcal{Q}^{(l)}, \mathcal{S}^{(l)})
\end{equation}
where $\hat{\mathcal{Q}}, \hat{\mathcal{S}}$ are the output query and support sets, $\mathcal{Q}^{(l)}, \mathcal{S}^{(l)}$ are their counterparts at layer $l$ after pseudo forward projection, and $L_{blocks}$ denotes the layer scope of HMS.

\subsection{Training and Inference}
Leveraging the MPS-Tuning's strong knowledge retention capability, we can fine-tune partial model weights directly without causing overfitting, thereby substantially enhancing the model's learning capacity. 
See Appendix C for details.

The final logits for training and inference are a weighted sum of the fine-tuned and original model logits, i.e.,
\begin{equation}
\text{logits} = \alpha \cdot \text{logits}_{\text{ft}} + (1 - \alpha) \cdot \text{logits}_{\text{zs}} \,.
\end{equation}
The total loss function for training combines the standard cross-entropy loss with our two regularization terms, i.e.,
\begin{equation}
\mathcal{L} = \mathcal{L}_{\text{CE}} + \lambda_{1}\mathcal{L}_{\text{MAR}} + \lambda_{2}\mathcal{L}_{\text{HMS}}
\end{equation}
where $\lambda_1$ and $\lambda_2$ are hyperparameters that balance the contributions of the regularization terms.

\begin{figure*}
    \centering
    \includegraphics[width=0.85\textwidth]{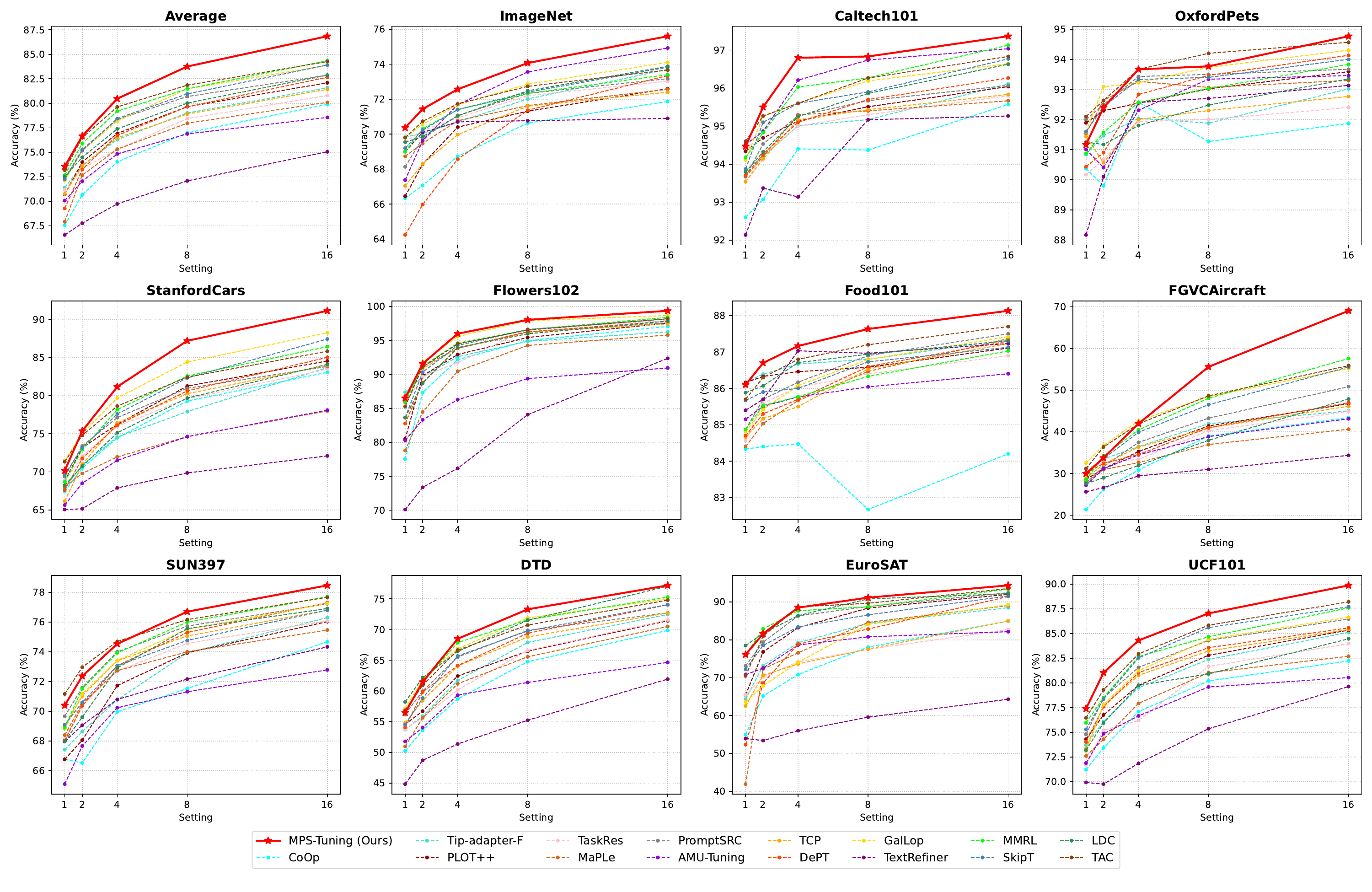}
    \caption{
      Performance comparison on 11 benchmark datasets.
    }
    \label{fig:performace}
\end{figure*}

\section{Experiments}
\subsection{Experimental Settings}
\noindent \textbf{Datasets} 
Following previous work~\cite{CoOp, taskres}, 
we evaluated our method on 11 datasets, including ImageNet~\cite{ImageNet}, Caltech101~\cite{Caltech}, Food101~\cite{food101}, DTD~\cite{DTD}, EuroSAT~\cite{EuroSAT}, FGVCAircraft~\cite{FGVCAircraft}, Flowers102~\cite{flower102}, OxfordPets~\cite{oxfordpets}, StanfordCars~\cite{stanfordcars}, SUN397~\cite{sun397}, and UCF101~\cite{ucf101}. Additionally, domain generalization capabilities were further assessed using the ImageNet-Sketch~\cite{imagenetsketch} and ImageNet-V2~\cite{imagenetV2} datasets.

\noindent \textbf{Implementation}
Following previous work~\cite{CoOp, taskres}, the model was trained on K-shot settings (K = 1, 2, 4, 8, 16) and evaluated on the full test set using CLIP with ViT-B/16~\cite{vit} and predefined text templates. 
Optimization was performed using SGD with cosine learning rate decay over 50 epochs, where a warm-up strategy increased the learning rate linearly from 1e-5 to 0.002 during the first epoch.
Data augmentation strategies consistent with those used in CoOp, including random cropping and random flipping, were applied. 
For hyperparameter configuration, the weights for MAR ($\lambda_1$), HMS ($\lambda_2$) and logits weight ($\alpha$) were set to 0.5, 0.1 and 0.3, respectively, 
with HMS applied to the last two layers.
All results were averaged over three random seeds.

\noindent \textbf{Baselines}
To demonstrate the superiority of our method, comprehensive comparisons were conducted against several SOTA methods, including CoOp~\cite{CoOp}, Tip-Adapter-F~\cite{tip-adapter}, PLOT++~\cite{plot}, MaPle~\cite{maple}, PromptSRC~\cite{promptsrc}, AMU-Tuning~\cite{tang2024amu}, TCP~\cite{yao2024tcp}, DePT~\cite{zhang2024dept}, GalLop~\cite{lafon2024gallop}, TextRefiner~\cite{xie2024textrefiner}, MMRL~\cite{guo2025mmrl}, SkipT~\cite{wu2025skip}, LDC~\cite{li2025logitsdeconfusion}, and TAC~\cite{TAC}.

\subsection{Efficacy Study}
\noindent \textbf{Classification Results} As shown in \cref{fig:performace}, our method achieves superior average performance across all shot settings compared to competing methods, with the performance gap widening as the number of training samples increases. Specifically, under the 1-shot, 4-shot, and 16-shot conditions, our approach improves accuracy by 0.88\%, 1.27\%, and 2.51\%, respectively, over the strongest baseline. On natural image datasets such as ImageNet and SUN397, where the pre-trained CLIP model has already encoded rich visual knowledge, MAR facilitates the integration of downstream task learning while preserving this prior knowledge, leading to significant performance gains. For datasets with greater cross-domain challenges, such as StanfordCars, FGVCAircraft, and UCF101, the synergistic operation of HMS and MAR enables our method to effectively balance novel knowledge acquisition with pre-trained knowledge retention, yielding significant performance advantages.

\begin{table}[t]
  \centering
  \caption{Generalization results on ImageNet and its variants.}
  \label{tab:generalization_results}
  \small
  \begin{tabular}{l ccc c}
    \toprule
    \multirow{2}{*}{Method} & Source & \multicolumn{2}{c}{Target} & \multirow{2}{*}{Avg} \\
    \cmidrule(lr){2-2} \cmidrule(lr){3-4}
    & ImageNet & -Sketch & -V2 & \\
    \midrule
    CLIP                & 66.73 & 46.15 & 60.83 & 57.90 \\
    Linear Probe CLIP   & 65.85 & 34.77 & 56.26 & 52.29 \\
    CoOp                & 71.92 & 46.71 & 64.18 & 60.94 \\
    MaPLe               & 72.56 & 49.20 & 64.10 & 61.95 \\
    PromptSRC           & 73.17 & 49.10 & 65.70 & 62.66 \\
    TCP                 & 72.40 & 48.17 & 64.83 & 61.80 \\
    MMRL                & 72.03 & 49.17 & 64.47 & 61.89 \\
    SkipT               & 72.77 & 49.73 & 65.67 & 62.72 \\
    \midrule
    \rowcolor{gray!25} MPS-Tuning (Ours) & \textbf{75.60} & \textbf{50.10} & \textbf{67.53} & \textbf{64.41} \\
    \bottomrule
  \end{tabular}
\end{table}

\begin{table}[t]
  \centering
  \caption{Efficiency comparison on SUN397.}
  \label{tab:efficiency}
  \small
  \begin{tabular}{lcc}
    \toprule
    \textbf{Method} & \textbf{Training FPS} & \textbf{Inference FPS} \\
    \midrule
    CLIP & - & 617 \\
    CoCoOp & 4.20 & 13.0 \\
    TCP & 120.9 & 617 \\
    TextRefiner & 85.30 & 553 \\
    \midrule
    \rowcolor{gray!25} MPS-Tuning (Ours) & 95.65 & 535 \\
    \bottomrule
  \end{tabular}
\end{table}

\noindent \textbf{Domain Generalization Results}
To validate the robustness of MPS-Tuning, models were trained on the ImageNet dataset and subsequently evaluated on both ImageNet-V2 and ImageNet-Sketch datasets. As demonstrated in \cref{tab:generalization_results}, MPS-Tuning consistently outperforms all baseline methods across the three datasets, thereby confirming its superior domain generalization capabilities.

\noindent \textbf{Efficiency}
The training and inference FPS of MPS-Tuning were evaluated on the SUN397 dataset on a single RTX 3090 GPU to assess its efficiency. As shown in \cref{tab:efficiency}, the method demonstrates comparable efficiency to existing approaches.

\begin{table}[t]
\centering
\caption{Ablation study on different components. ``Avg11" is the average over all 11 benchmark datasets.}
\label{tab:ablation_components}
\setlength{\tabcolsep}{4pt}
\small
\begin{tabular}{ccc ccc c}
\toprule
\multicolumn{3}{c}{\textbf{Components}} & \multicolumn{4}{c}{\textbf{Datasets}} \\
\cmidrule(r){1-3} \cmidrule(r){4-7}
$\mathcal{L}_{CE}$ & $\mathcal{L}_{MAR}$ & $\mathcal{L}_{HMS}$ & ImageNet & Cars & SUN397 & Avg11 \\
\midrule
\checkmark &            &            & 72.93 & 90.00 & 76.30 & 85.41 \\
\checkmark & \checkmark &            & 75.30 & 90.80 & 78.07 & 86.44 \\
\checkmark &            & \checkmark & 74.77 & 90.77 & 77.80 & 86.20 \\
\midrule
\rowcolor{gray!25} \checkmark & \checkmark & \checkmark & \textbf{75.60} & \textbf{91.13} & \textbf{78.47} & \textbf{86.85} \\
\bottomrule
\end{tabular}
\end{table}

\subsection{Ablation Study}
\noindent \textbf{Impact of Model Components}
To establish the individual contribution of each proposed module, ablation studies were conducted on the ImageNet, StanfordCars, and SUN397 datasets under 16-shot settings. \cref{tab:ablation_components} illustrates that both modules contribute meaningful performance gains over the standard cross-entropy loss when applied separately, while their joint application achieves additional 
improvements.

\begin{table*}[t]
\centering
\caption{Ablation study on MAR. ``Avg11" is the average over all 11 benchmark datasets.}
\label{tab:ablation_glo_lo}
\small
\setlength{\tabcolsep}{6pt}
\begin{tabular}{l ccc ccc ccc}
\toprule
\multirow{2}{*}{\textbf{Method}} & \multicolumn{3}{c}{\textbf{1-shot}} & \multicolumn{3}{c}{\textbf{4-shot}} & \multicolumn{3}{c}{\textbf{16-shot}} \\
\cmidrule(lr){2-4} \cmidrule(lr){5-7} \cmidrule(lr){8-10}
 & \textbf{ImageNet} & \textbf{UCF101} & \textbf{Avg11} & \textbf{ImageNet} & \textbf{UCF101} & \textbf{Avg11} & \textbf{ImageNet} & \textbf{UCF101} & \textbf{Avg11} \\
\midrule
None   & 69.33 & 76.73 & 72.35 & 71.57 & 83.63 & 79.60 & 74.77 & 89.23 & 86.20 \\
only Global & 70.03 & 77.37 & 73.42 & 72.23 & 83.90 & 80.18 & 75.17 & 89.53 & 86.57 \\
only Local  & 69.90 & 76.73 & 72.82 & 72.43 & 84.27 & 80.21 & 75.57 & 89.60 & 86.67 \\
\midrule
\rowcolor{gray!25} MAR (Global+Local) & \textbf{70.37} & \textbf{77.40} & \textbf{73.55} & \textbf{72.57} & \textbf{84.30} & \textbf{80.47} & \textbf{75.60} & \textbf{89.87} & \textbf{86.85} \\
\bottomrule
\end{tabular}
\end{table*}

\begin{table*}[t]
\centering
\caption{Ablation study of different consistency constraints. ``Avg11" is the average over all 11 benchmark datasets.}
\label{tab:ablation_constraints}
\small
\setlength{\tabcolsep}{6pt}
\begin{tabular}{lc ccc ccc ccc}
\toprule
\multicolumn{2}{c}{\multirow{2}{*}{\textbf{Consistency Constraint}}} & \multicolumn{3}{c}{\textbf{1-shot}} & \multicolumn{3}{c}{\textbf{4-shot}} & \multicolumn{3}{c}{\textbf{16-shot}} \\
\cmidrule(lr){3-5} \cmidrule(lr){6-8} \cmidrule(lr){9-11}
 & & \textbf{ImageNet} & \textbf{UCF101} & \textbf{Avg11} & \textbf{ImageNet} & \textbf{UCF101} & \textbf{Avg11} & \textbf{ImageNet} & \textbf{UCF101} & \textbf{Avg11} \\
\midrule
\multirow{3}{*}{Feature-based} & $cos$ & 69.73 & 75.37 & 72.77 & 71.83 & 82.63 & 79.36 & 74.73 & 88.20 & 86.07 \\
 & $\ell_1$ & 69.43 & 76.73 & 72.41 & 71.63 & 83.63 & 79.66 & 74.83 & 89.23 & 86.24 \\
 & $\ell_2$ & 69.33 & 76.80 & 72.35 & 71.63 & 83.60 & 79.65 & 74.77 & 89.17 & 86.20 \\
\midrule
\multirow{3}{*}{Logits-based} & $kl$ & 69.67 & 73.00 & 71.43 & 71.60 & 78.70 & 77.77 & 73.87 & 84.97 & 84.20 \\
 & $\ell_1$ & 69.50 & 75.00 & 71.91 & 71.50 & 81.03 & 78.22 & 74.67 & 87.67 & 85.65 \\
 & $\ell_2$ & 69.47 & 74.93 & 71.85 & 71.57 & 81.10 & 77.51 & 74.70 & 87.73 & 85.59 \\
\midrule
\rowcolor{gray!25} Manifold-based & $\mathcal{L}_{MAR}$ & \textbf{70.37} & \textbf{77.40} & \textbf{73.55} & \textbf{72.57} & \textbf{84.30} & \textbf{80.47} & \textbf{75.60} & \textbf{89.87} & \textbf{86.85} \\
\bottomrule
\end{tabular}
\end{table*}

\noindent \textbf{Impact of Global and Local Alignment}
To assess the roles of global topological and local geometric alignment, we perform ablation studies under 1-shot, 4-shot, and 16-shot settings. As shown in \cref{tab:ablation_glo_lo}, both alignments consistently improve performance, but exhibit different preferences across sample sizes. Global alignment is more beneficial in low-shot scenarios, while local alignment becomes more effective as sample size increases.
This is because global alignment offers essential relational priors when the model lacks enough data to capture inter-class structure, whereas local alignment enhances robustness by preventing shortcut learning from incidental factors when more data is available. Combining both leads to the best performance, confirming their complementary nature at different structural levels.

\noindent \textbf{Comparison to other Consistency Constraints}
The effectiveness of MAR was further validated through performance comparisons with standard point-wise consistency constraints applied at feature and logit levels. As shown in \cref{tab:ablation_constraints}, under 1-shot, 4-shot, and 16-shot settings, our method outperformed the strongest baselines by 0.64\%, 0.74\%, and 0.77\% on ImageNet, and by 0.78\%, 0.81\%, and 0.61\% when averaged across all 11 datasets. These results confirm that manifold-based consistency regularization yields superior performance improvements compared to point-alignment-based consistency constraints.

\begin{figure}
    \centering
    \includegraphics[width=0.4\textwidth]{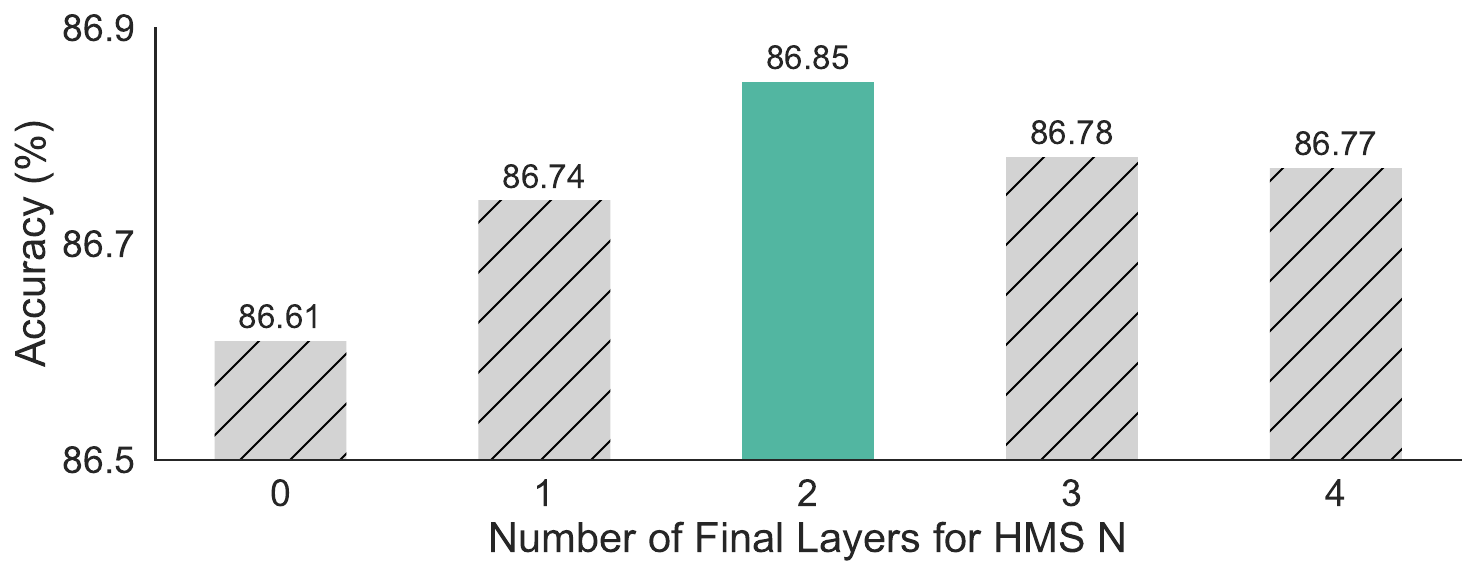}
    \caption{
      Ablation study on HMS depth. The x-axis $N$ means applying HMS to the last $N$ layers of the model.
    }
    \label{fig:hms_layer}
\end{figure}

\noindent \textbf{Impact of HMS depth}
The application scope of HMS was further explored. As shown in \cref{fig:hms_layer}, applying HMS to final layer and penultimate layer yields the best performance gain, while extending it to earlier layers results in degradation. This finding aligns with the semantic hierarchy in neural networks~\cite{zeiler2014visualizing, krizhevsky2012imagenet, gandelsman2024interpreting}: deeper layers encode class-specific features that benefit from HMS, whereas intermediate layers capture more generic representations. Applying HMS too early may induce premature specialization, leading to overfitting. 
Therefore, HMS is used in the last two layers to enhance high-level representation learning while maintaining generalization.

\subsection{Interpretability Study}
\noindent \textbf{Quantitative Analysis of Manifold Preservation}
To further validate MAR's manifold preservation capabilities, representational similarity analysis~\cite{rsa} was applied to models before and after fine-tuning, enabling quantitative evaluation of semantic manifold structural variations. 
\cref{fig:rsa} shows that MAR achieves the best manifold structure preservation on natural image datasets like ImageNet and SUN397, where pretrained CLIP already contains sufficient knowledge. 
Furthermore, we find that MAR's preservation capability spontaneously diminishes on datasets with extensive novel task-relevant knowledge (UCF101, FGVCAircraft).
This adaptive behavior results from MAR's relaxed Gram matrix alignment rather than rigid feature or logits constraints, enabling knowledge preservation when beneficial while allowing adaptation when necessary. Additionally, we find that cosine similarity and KL divergence impose stricter manifold structure preservation compared to other point-based consistency constraints through their holistic vector-level restrictions, whereas L1 and L2 losses independently constrain each channel's variation, potentially causing minimal per-channel changes but substantial overall vector modifications. However, their excessive strictness can limit learning (e.g., UCF101 and Avg11 results in \cref{tab:ablation_constraints}), whereas MAR's Gram matrix consistency provides dynamic constraints that balance preservation with learning capability.

\begin{figure}
    \centering
    \includegraphics[width=0.4\textwidth]{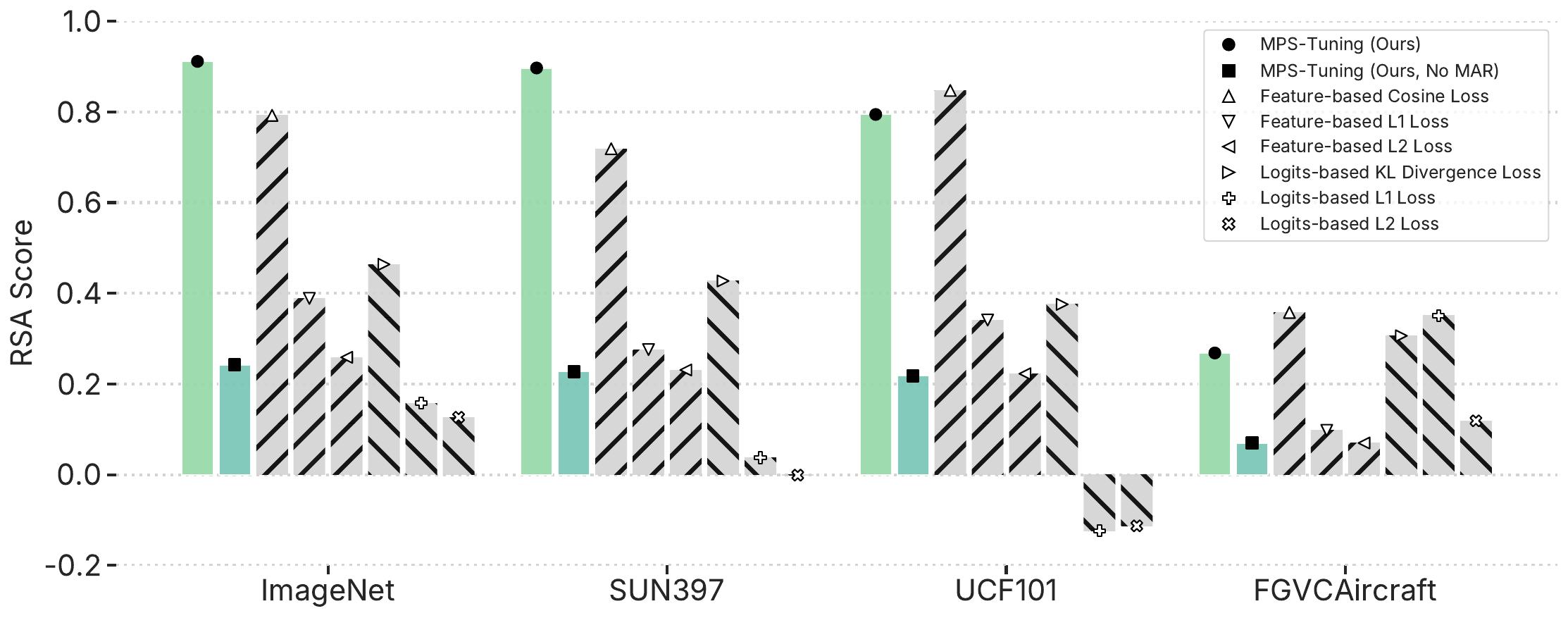}
    \caption{
      Representation similarity analysis of different consistency constraint mechanisms.
    }
    \label{fig:rsa}
\end{figure}

\begin{figure}
    \centering
    \includegraphics[width=0.4\textwidth]{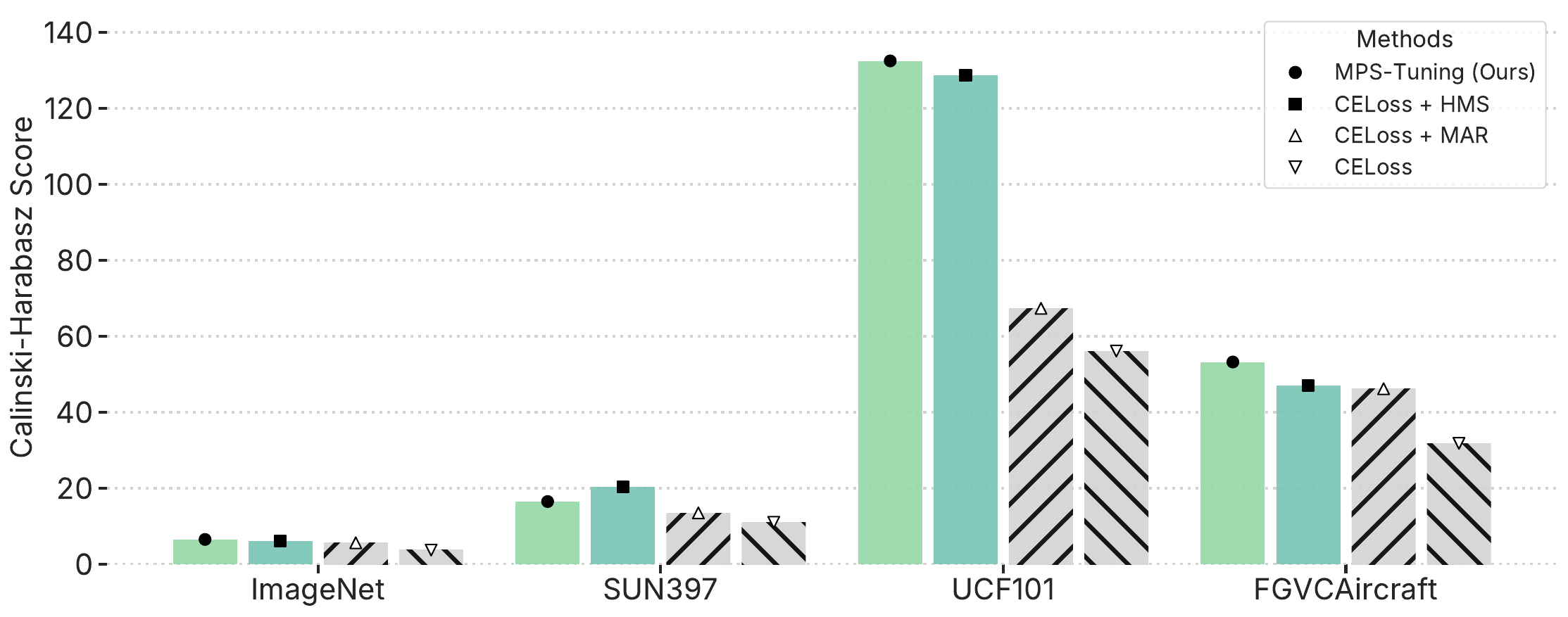}
    \caption{
      Class separability assessment of different component categories using Calinski-Harabasz score.
    }
    \label{fig:csi}
\end{figure}

\noindent \textbf{Quantitative Evaluation of Class Separation}
To assess feature clustering behavior, we apply the Calinski-Harabasz Index~\cite{csi} on test sets from ImageNet, SUN397, UCF101 and FGVCAircraft. As shown in \cref{fig:csi}, both HMS and MAR individually improve clustering performance compared to cross-entropy loss, with HMS performing particularly well on cross-domain dataset UCF101. 
The combination of HMS and MAR yields similar scores to those of HMS alone, suggesting that our method effectively improves feature separability.

\section{Conclusion}
In this study, we present MPS-Tuning, a novel manifold alignment-based fine-tuning framework that preserves the manifold structure of pre-trained models through Manifold Alignment Regularization while enhancing the discriminative capacity of semantic manifolds for downstream tasks via Hierarchical Manifold Sculpting. We provide theoretical validation by demonstrating that Manifold Alignment Regularization constitutes an approximate upper bound of the Gromov-Wasserstein distance, thereby establishing the theoretical foundation for its effectiveness. Extensive experiments demonstrate that MPS-Tuning significantly improves VLM performance in few-shot scenarios.
This work introduces a novel perspective on VLM fine-tuning and shows promise for broader applicability across diverse scenarios.

{
    \small
    \bibliography{aaai2026}
}

\clearpage

\appendix

\twocolumn[
  \begin{center}
    \LARGE\bfseries Preserve and Sculpt: Manifold-Aligned Fine-tuning of Vision-Language Models for Few-Shot Learning \par 
    \vspace{1.5em} %
    \large Appendix \par
    \vspace{2em} %
  \end{center}
]
\begin{table*}[!ht]
    \centering
    \caption{Performance comparison on CLIP benchmark on ViT-B/16.
    }
    \begin{adjustbox}{max width = 0.95\textwidth}
    \begin{tabular}{lccccccccccccc}
        \toprule
        Method & Setting & ImageNet & Caltech101 & OxfordPets & StanfordCars & Flowers102 & Food101 & FGVCAircraft & SUN397 & DTD & EuroSAT & UCF101 & Average \\
        \midrule
        CLIP~\cite{CLIP} & \multirow{16}{*}{1-shot} & 66.73 & 93.35 & 88.25 & 65.48 & 67.44 & 83.65 & 23.67 & 62.59 & 44.27 & 42.01 & 65.13 & 63.87  \\
        CoOp~\cite{CoOp} &  & 66.33 & 92.60 & 90.37 & 67.43 & 77.53 & 84.33 & 21.37 & 66.77 & 50.23 & 54.93 & 71.23 & 67.56 \\
        Tip-adapter-F~\cite{tip-adapter} &  & 69.83 & 93.83 & 90.84 & 67.88 & \textbf{87.37} & \textbf{86.17} & 30.39 & 67.42 & 53.72 & 64.35 & 73.70 & 71.41 \\
        PLOT++~\cite{plot} &  & 66.45 & 94.34 & 91.89 & 68.81 & 80.48 & 86.16 & 28.60 & 66.77 & 54.57 & 65.41 & 74.31 & 70.71 \\
        TaskRes~\cite{taskres} &  & 69.57 & 93.53 & 90.17 & 68.83 & 85.77 & 84.57 & 31.30 & 68.13 & 53.80 & 65.43 & 71.70 & 71.16 \\
        MaPLe~\cite{maple} &  & 68.73 & 93.67 & 91.53 & 68.07 & 78.80 & 84.40 & 27.97 & 68.40 & 50.97 & 41.90 & 72.60 & 67.91 \\
        PromptSRC~\cite{promptsrc} &  & 68.13 & 93.67 & 92.00 & 69.40 & 85.93 & 84.87 & 27.67 & 69.67 & 56.23 & 73.13 & 74.80 & 72.32 \\
        AMU-Tuning~\cite{tang2024amu} &  & 67.37 & 94.51 & 91.01 & 65.65 & 80.29 & 85.15 & 27.20 & 65.11 & 51.77 & 70.76 & 71.89 & 70.07 \\
        TCP~\cite{yao2024tcp} &  & 67.03 & 93.53 & 91.43 & 66.17 & 86.87 & 84.67 & 28.87 & 69.00 & 54.80 & 62.53 & 73.37 & 70.75 \\
        DePT~\cite{zhang2024dept} &  & 64.23 & 93.70 & 90.43 & 67.63 & 82.77 & 84.70 & 30.03 & 68.03 & 54.07 & 52.30 & 74.03 & 69.27 \\
        GalLop~\cite{lafon2024gallop} &  & 69.79 & 94.11 & 91.59 & \textbf{71.47} & 86.12 & 84.81 & \textbf{32.52} & 68.82 & 57.15 & 63.58 & 73.11 & 72.10 \\
        TextRefiner~\cite{xie2024textrefiner} &  & 69.00 & 92.13 & 88.17 & 65.07 & 70.10 & 85.40 & 25.63 & 67.97 & 44.83 & 53.93 & 69.93 & 66.56 \\
        MMRL~\cite{guo2025mmrl} &  & 69.00 & 94.17 & 90.87 & 68.70 & 85.97 & 84.87 & 28.53 & 68.90 & 56.37 & 76.00 & 75.97 & 72.67 \\
        SkipT~\cite{wu2025skip} &  & 69.20 & 93.87 & 91.60 & 69.63 & 83.63 & 85.67 & 29.93 & 69.10 & 54.50 & 72.23 & 75.30 & 72.24 \\
        LDC~\cite{li2025logitsdeconfusion} &  & 69.54 & 93.79 & 91.25 & 68.24 & 83.64 & 85.88 & 27.57 & 67.99 & \textbf{58.22} & \textbf{78.49} & 73.20 & 72.53 \\
        TAC~\cite{TAC} &  & 69.80 & \textbf{94.60} & \textbf{92.10} & 71.33 & 85.27 & 85.70 & 31.17 & \textbf{71.17} & 56.87 & 70.47 & 76.47 & 73.18 \\
        \rowcolor{Gray}
        MPS-Tuning (Ours) &  & \textbf{70.37} & 94.47 & 91.17 & 70.17 & 86.50 & 86.10 & 29.97 & 70.40 & 56.47 & 76.10 & \textbf{77.40} & \textbf{73.55} \\
        
        \midrule
        CLIP~\cite{CLIP} & \multirow{16}{*}{2-shot} & 66.73 & 93.35 & 88.25 & 65.48 & 67.44 & 83.65 & 23.67 & 62.59 & 44.27 & 42.01 & 65.13 & 63.87  \\
        CoOp~\cite{CoOp} &  & 67.07 & 93.07 & 89.80 & 70.50 & 87.33 & 84.40 & 26.20 & 66.53 & 53.60 & 65.17 & 73.43 & 70.65 \\
        Tip-adapter-F~\cite{tip-adapter} &  & 70.04 & 94.20 & 91.47 & 70.91 & 89.65 & 86.39 & 33.51 & 68.64 & 55.91 & 73.17 & 76.10 & 73.64 \\
        PLOT++~\cite{plot} &  & 68.28 & 94.69 & 92.29 & 73.17 & 89.81 & 86.33 & 31.14 & 68.06 & 56.72 & 76.80 & 76.76 & 74.00 \\
        TaskRes~\cite{taskres} &  & 70.20 & 94.23 & 90.67 & 72.07 & 89.70 & 85.60 & 32.67 & 70.43 & 55.67 & 70.23 & 75.20 & 73.33 \\
        MaPLe~\cite{maple} &  & 69.47 & 94.20 & 92.63 & 69.80 & 84.47 & 85.03 & 30.93 & 70.53 & 55.63 & 72.30 & 74.30 & 72.66 \\
        PromptSRC~\cite{promptsrc} &  & 69.77 & 94.53 & 92.50 & 73.40 & 91.17 & 85.70 & 31.70 & 71.60 & 59.97 & 79.37 & 78.50 & 75.29 \\
        AMU-Tuning~\cite{tang2024amu} &  & 69.62 & 94.86 & 90.41 & 68.52 & 83.31 & 85.51 & 31.15 & 67.67 & 54.04 & 72.55 & 74.85 & 72.04 \\
        TCP~\cite{yao2024tcp} &  & 68.30 & 94.13 & 90.57 & 71.00 & 90.87 & 85.17 & 32.23 & 71.03 & 58.43 & 70.63 & 77.70 & 73.64 \\
        DePT~\cite{zhang2024dept} &  & 65.97 & 94.27 & 90.90 & 71.77 & 88.80 & 85.30 & 32.43 & 70.37 & 59.83 & 68.57 & 77.60 & 73.25 \\
        GalLop~\cite{lafon2024gallop} &  & 70.57 & 95.25 & \textbf{93.09} & 75.03 & \textbf{91.81} & 85.42 & \textbf{36.74} & 71.17 & 61.90 & 67.83 & 77.81 & 75.15 \\
        TextRefiner~\cite{xie2024textrefiner} &  & 70.10 & 93.37 & 90.10 & 65.17 & 73.37 & 85.70 & 26.63 & 69.07 & 48.70 & 53.40 & 69.77 & 67.76 \\
        MMRL~\cite{guo2025mmrl} &  & 70.30 & 94.83 & 91.57 & 72.93 & 91.20 & 85.53 & 34.23 & 71.53 & 61.37 & \textbf{82.87} & 78.50 & 75.90 \\
        SkipT~\cite{wu2025skip} &  & 70.23 & 95.10 & 92.43 & 73.17 & 90.33 & 85.90 & 33.53 & 70.60 & 58.83 & 78.50 & 78.37 & 75.18 \\
        LDC~\cite{li2025logitsdeconfusion} &  & 69.86 & 94.32 & 91.17 & 70.75 & 88.71 & 86.07 & 28.98 & 69.61 & \textbf{62.17} & 81.73 & 75.95 & 74.48 \\
        TAC~\cite{TAC} &  & 70.73 & 95.27 & 92.63 & 74.80 & 91.10 & 86.30 & 36.37 & \textbf{72.97} & 60.93 & 80.97 & 79.30 & 76.49 \\
        \rowcolor{Gray}
        MPS-Tuning (Ours) &  & \textbf{71.43} & \textbf{95.50} & 92.40 & \textbf{75.37} & 91.57 & \textbf{86.70} & 33.77 & 72.37 & 61.47 & 81.63 & \textbf{81.03} & \textbf{76.66} \\
        
        \midrule
        CLIP~\cite{CLIP} & \multirow{16}{*}{4-shot} & 66.73 & 93.35 & 88.25 & 65.48 & 67.44 & 83.65 & 23.67 & 62.59 & 44.27 & 42.01 & 65.13 & 63.87  \\
        CoOp~\cite{CoOp} &  & 68.73 & 94.40 & 92.57 & 74.47 & 92.17 & 84.47 & 30.83 & 69.97 & 58.70 & 70.80 & 77.10 & 74.02 \\
        Tip-adapter-F~\cite{tip-adapter} &  & 70.70 & 95.01 & 92.04 & 74.57 & 92.61 & 86.67 & 36.45 & 70.77 & 61.70 & 79.22 & 79.51 & 76.30 \\
        PLOT++~\cite{plot} &  & 70.40 & 95.13 & 92.55 & 76.25 & 92.93 & 86.46 & 35.29 & 71.73 & 62.43 & 83.21 & 79.76 & 76.92 \\
        TaskRes~\cite{taskres} &  & 70.93 & 95.00 & 91.93 & 75.97 & 91.73 & 86.03 & 33.40 & 72.70 & 60.17 & 74.17 & 76.20 & 75.29 \\
        MaPLe~\cite{maple} &  & 70.77 & 95.30 & 93.27 & 71.97 & 90.47 & 85.67 & 32.63 & 72.73 & 61.17 & 76.57 & 77.93 & 75.32 \\
        PromptSRC~\cite{promptsrc} &  & 71.07 & 95.27 & 93.43 & 77.13 & 93.87 & 86.17 & 37.47 & 74.00 & 65.53 & 86.30 & 81.57 & 78.35 \\
        AMU-Tuning~\cite{tang2024amu} &  & 71.69 & 96.21 & 92.31 & 71.52 & 86.28 & 85.77 & 34.48 & 70.24 & 59.32 & 78.63 & 76.67 & 74.83 \\
        TCP~\cite{yao2024tcp} &  & 69.97 & 95.17 & 92.00 & 76.43 & 93.83 & 85.50 & 36.37 & 73.40 & 64.07 & 73.63 & 80.77 & 76.47 \\
        DePT~\cite{zhang2024dept} &  & 68.57 & 95.10 & 92.83 & 76.13 & 93.87 & 85.70 & 34.57 & 73.07 & 64.13 & 78.87 & 81.03 & 76.72 \\
        GalLop~\cite{lafon2024gallop} &  & 71.67 & 95.60 & 93.22 & 79.69 & 95.58 & 86.08 & \textbf{42.65} & 73.42 & 66.88 & 74.05 & 81.23 & 78.19 \\
        TextRefiner~\cite{xie2024textrefiner} &  & 70.70 & 93.13 & 92.57 & 67.90 & 76.17 & 87.03 & 29.43 & 70.80 & 51.37 & 56.00 & 71.87 & 69.72 \\
        MMRL~\cite{guo2025mmrl} &  & 71.40 & 96.03 & 92.57 & 78.17 & 94.60 & 85.77 & 40.47 & 73.93 & 67.87 & 87.67 & 82.67 & 79.20 \\
        SkipT~\cite{wu2025skip} &  & 71.40 & 95.60 & 93.33 & 77.60 & 94.27 & 86.00 & 39.90 & 73.07 & 65.70 & 83.40 & 82.53 & 78.44 \\
        LDC~\cite{li2025logitsdeconfusion} &  & 71.04 & 95.25 & 91.80 & 75.13 & 93.95 & 86.71 & 31.92 & 72.92 & 66.43 & 86.37 & 79.75 & 77.39 \\
        TAC~\cite{TAC} &  & 71.73 & 95.60 & \textbf{93.67} & 78.63 & 94.50 & 86.80 & 41.93 & \textbf{74.70} & 66.67 & \textbf{88.70} & 82.93 & 79.62 \\
        \rowcolor{Gray}
        MPS-Tuning (Ours) &  & \textbf{72.57} & \textbf{96.80} & \textbf{93.67} & \textbf{81.17} & \textbf{96.00} & \textbf{87.17} & 41.97 & 74.53 & \textbf{68.50} & 88.50 & \textbf{84.30} & \textbf{80.47} \\
        
        \midrule
        CLIP~\cite{CLIP} & \multirow{16}{*}{8-shot} & 66.73 & 93.35 & 88.25 & 65.48 & 67.44 & 83.65 & 23.67 & 62.59 & 44.27 & 42.01 & 65.13 & 63.87  \\
        CoOp~\cite{CoOp} &  & 70.63 & 94.37 & 91.27 & 79.30 & 94.97 & 82.67 & 39.00 & 71.53 & 64.77 & 78.07 & 80.20 & 76.98 \\
        Tip-adapter-F~\cite{tip-adapter} &  & 72.01 & 95.17 & 91.88 & 77.91 & 94.88 & 86.80 & 41.94 & 73.93 & 67.91 & 84.35 & 82.37 & 79.01 \\
        PLOT++~\cite{plot} &  & 71.31 & 95.51 & 93.02 & 81.26 & 95.44 & 86.58 & 41.42 & 73.93 & 66.49 & 88.37 & 82.80 & 79.65 \\
        TaskRes~\cite{taskres} &  & 72.20 & 95.30 & 92.00 & 79.60 & 96.70 & 86.40 & 40.27 & 74.57 & 66.60 & 77.47 & 81.67 & 78.43 \\
        MaPLe~\cite{maple} &  & 71.63 & 95.40 & 93.07 & 74.63 & 94.27 & 86.47 & 36.97 & 74.00 & 65.57 & 84.60 & 81.03 & 77.97 \\
        PromptSRC~\cite{promptsrc} &  & 72.33 & 95.67 & 93.50 & 80.97 & 96.27 & 86.90 & 43.27 & 75.73 & 69.87 & 88.80 & 84.30 & 80.69 \\
        AMU-Tuning~\cite{tang2024amu} &  & 73.56 & 96.74 & 93.34 & 74.62 & 89.38 & 86.04 & 38.86 & 71.32 & 61.39 & 80.78 & 79.59 & 76.87 \\
        TCP~\cite{yao2024tcp} &  & 71.60 & 95.43 & 92.30 & 80.27 & 96.20 & 86.53 & 40.93 & 75.07 & 68.80 & 77.57 & 83.23 & 78.90 \\
        DePT~\cite{zhang2024dept} &  & 71.37 & 95.70 & 93.47 & 80.63 & 96.27 & 86.60 & 41.00 & 75.30 & 69.40 & 82.83 & 83.57 & 79.65 \\
        GalLop~\cite{lafon2024gallop} &  & 72.86 & 96.19 & 93.73 & 84.42 & 97.90 & 86.79 & 48.48 & 75.51 & 71.89 & 84.04 & 84.40 & 81.48 \\
        TextRefiner~\cite{xie2024textrefiner} &  & 70.77 & 95.17 & 92.70 & 69.87 & 84.07 & 86.97 & 31.00 & 72.17 & 55.23 & 59.57 & 75.37 & 72.08 \\
        MMRL~\cite{guo2025mmrl} &  & 72.33 & 96.27 & 93.03 & 82.57 & 96.60 & 86.33 & 48.07 & 76.00 & 71.60 & 88.73 & 84.67 & 81.47 \\
        SkipT~\cite{wu2025skip} &  & 72.40 & 95.90 & 93.40 & 82.33 & 96.60 & 86.73 & 46.50 & 74.77 & 69.77 & 86.57 & 85.60 & 80.96 \\
        LDC~\cite{li2025logitsdeconfusion} &  & 72.48 & 95.86 & 92.48 & 79.69 & 95.98 & 86.94 & 37.98 & 75.56 & 71.51 & 90.80 & 80.91 & 80.02 \\
        TAC~\cite{TAC} &  & 72.73 & 96.27 & \textbf{94.20} & 82.43 & 96.53 & 87.20 & 48.63 & 76.17 & 70.73 & 89.63 & 85.83 & 81.85 \\
        \rowcolor{Gray}
        MPS-Tuning (Ours) &  & \textbf{74.07} & \textbf{96.83} & 93.77 & \textbf{87.20} & \textbf{98.03} & \textbf{87.63} & \textbf{55.60} & \textbf{76.70} & \textbf{73.30} & \textbf{91.13} & \textbf{87.03} & \textbf{83.75} \\
        
        \midrule
        CLIP~\cite{CLIP} & \multirow{16}{*}{16-shot} & 66.73 & 93.35 & 88.25 & 65.48 & 67.44 & 83.65 & 23.67 & 62.59 & 44.27 & 42.01 & 65.13 & 63.87  \\
        CoOp~\cite{CoOp} &  & 71.87 & 95.57 & 91.87 & 83.07 & 97.07 & 84.20 & 43.40 & 74.67 & 69.87 & 84.93 & 82.23 & 79.89 \\
        Tip-adapter-F~\cite{tip-adapter} &  & 73.71 & 96.11 & 93.02 & 83.68 & 96.26 & 87.34 & 45.03 & 76.30 & 72.46 & 88.47 & 85.12 & 81.59 \\
        PLOT++~\cite{plot} &  & 72.60 & 96.04 & 93.59 & 84.55 & 97.56 & 87.11 & 46.74 & 76.03 & 71.43 & 92.00 & 85.34 & 82.09 \\
        TaskRes~\cite{taskres} &  & 73.03 & 95.80 & 92.40 & 83.47 & 97.93 & 86.90 & 44.90 & 76.07 & 71.57 & 82.70 & 83.97 & 80.79 \\
        MaPLe~\cite{maple} &  & 72.57 & 95.67 & 93.30 & 78.00 & 95.80 & 87.37 & 40.63 & 75.47 & 70.50 & 89.03 & 82.70 & 80.09 \\
        PromptSRC~\cite{promptsrc} &  & 73.17 & 96.07 & 93.67 & 83.83 & 97.60 & 87.50 & 50.83 & 77.23 & 72.73 & 92.43 & 86.47 & 82.87 \\
        AMU-Tuning~\cite{tang2024amu} &  & 74.93 & 97.04 & 93.46 & 78.10 & 90.95 & 86.40 & 43.12 & 72.78 & 64.66 & 82.14 & 80.54 & 78.56 \\
        TCP~\cite{yao2024tcp} &  & 72.40 & 95.83 & 92.77 & 84.00 & 97.43 & 87.23 & 46.13 & 76.80 & 72.70 & 85.00 & 85.40 & 81.43 \\
        DePT~\cite{zhang2024dept} &  & 73.35 & 96.27 & 94.13 & 85.03 & 97.83 & 87.30 & 47.00 & 77.30 & 74.03 & 91.43 & 85.57 & 82.66 \\
        GalLop~\cite{lafon2024gallop} &  & 74.11 & 96.64 & 94.30 & 88.24 & 98.67 & 87.42 & 55.16 & 77.25 & 75.00 & 89.24 & 86.64 & 83.88 \\
        TextRefiner~\cite{xie2024textrefiner} &  & 70.90 & 95.27 & 93.13 & 72.10 & 92.37 & 87.23 & 34.37 & 74.33 & 61.93 & 64.30 & 79.63 & 75.05 \\
        MMRL~\cite{guo2025mmrl} &  & 73.40 & 97.13 & 93.83 & 86.43 & 98.40 & 87.03 & 57.60 & 77.70 & 75.30 & 93.37 & 87.60 & 84.34 \\
        SkipT~\cite{wu2025skip} &  & 73.83 & 96.77 & 94.00 & 87.43 & 98.17 & 87.13 & 55.57 & 76.80 & 74.07 & 91.57 & 87.70 & 83.91 \\
        LDC~\cite{li2025logitsdeconfusion} &  & 73.88 & 96.63 & 93.35 & 84.11 & 97.85 & 87.31 & 47.88 & 76.91 & 77.07 & 92.16 & 84.46 & 82.87 \\
        TAC~\cite{TAC} &  & 73.67 & 96.83 & 94.57 & 85.83 & 98.20 & 87.70 & 55.87 & 77.67 & 74.80 & 93.53 & 88.20 & 84.26 \\
        \rowcolor{Gray}
        MPS-Tuning (Ours) &  & \textbf{75.60} & \textbf{97.37} & \textbf{94.77} & \textbf{91.13} & \textbf{99.37} & \textbf{88.13} & \textbf{69.03} & \textbf{78.47} & \textbf{77.20} & \textbf{94.37} & \textbf{89.87} & \textbf{86.85} \\
        \bottomrule
    \end{tabular}
    \end{adjustbox}
    \label{tab:numerical_compare_CLIP_vit}
\end{table*}

\section{A. Detailed Proof for the Upper Bound on Gromov-Wasserstein Distance}
\label{sec:appendix_proof}

\begin{theorem}
The alignment of Gram matrices under the $L_p$-norm serves as a tractable upper bound for the $p$-th order Gromov-Wasserstein distance ($GW_p$), when the underlying metric is the cosine distance.
\end{theorem}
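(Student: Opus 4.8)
\emph{Proof proposal.} The plan is to exhibit one explicit feasible coupling in the quadratic program defining $GW_p$ and then exploit the affine relationship between cosine distance and inner product, so that the bound collapses onto the entrywise $L_p$ distance between Gram matrices. Model the two metric--measure spaces as $(\mathcal{X},\mathbf{D}_{\mathcal{X}},\mu)$ and $(\mathcal{Y},\mathbf{D}_{\mathcal{Y}},\nu)$, where $\mathcal{X}=\{\mathbf{z}_i\}_{i=1}^n$ are the unit-normalized features of the original encoder, $\mathcal{Y}=\{\mathbf{z}'_i\}_{i=1}^n$ those of the fine-tuned encoder on the \emph{same} inputs, and $\mu=\nu=\tfrac1n\mathbf{1}$. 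Because samples are in natural one-to-one correspondence, the diagonal plan $\pi^{\star}_{ik}=\tfrac1n\delta_{ik}$ has the required marginals, so $\pi^{\star}\in\Pi(\mu,\nu)$; substituting it removes the NP-hard search over $\Pi(\mu,\nu)$ and leaves a closed form, which is what makes the bound tractable.

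The first step is to plug $\pi^{\star}$ into \cref{eq:gw_discrete}: the product $\pi^{\star}_{ik}\pi^{\star}_{jl}$ is supported on $k=i,\ l=j$, so the quadruple sum collapses to $\tfrac1{n^2}\sum_{i,j}\bigl|(\mathbf{D}_{\mathcal{X}})_{ij}-(\mathbf{D}_{\mathcal{Y}})_{ij}\bigr|^{p}$, and since the true $GW_p$ minimizes over $\Pi(\mu,\nu)$ it is at most this value. The second step uses the metric: with cosine distance on unit vectors, $(\mathbf{D}_{\mathcal{X}})_{ij}=1-\langle\mathbf{z}_i,\mathbf{z}_j\rangle=1-S_{ij}$ and $(\mathbf{D}_{\mathcal{Y}})_{ij}=1-S'_{ij}$, so the additive constant cancels and $\bigl|(\mathbf{D}_{\mathcal{X}})_{ij}-(\mathbf{D}_{\mathcal{Y}})_{ij}\bigr|=\lvert S_{ij}-S'_{ij}\rvert$ exactly. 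Hence $GW_p(\mu,\nu)\le n^{-2/p}\,\lVert\mathbf{S}-\mathbf{S}'\rVert_{p}$, the entrywise $L_p$ Gram-alignment quantity; for $p=1$ this is precisely $\mathcal{L}^{\text{global}}_{\text{MAR}}$ up to the $1/N^{2}$ normalization, and repeating the argument over the $M{+}1$ token features of a single image reproduces $\mathcal{L}^{\text{local}}_{\text{MAR}}$.

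The step I expect to be the main obstacle --- and the reason the statement is phrased with \emph{approximate} --- is the quality of the diagonal coupling relative to the optimal one. Since $GW_p$ is a minimum, $\pi^{\star}$ certifies only an upper bound; I would argue it is near-tight in the fine-tuning regime, where the cross-entropy objective keeps class-consistent features from drifting, so any coupling that permutes non-corresponding samples incurs transport cost at least comparable to the diagonal one, and I would make this quantitative via a perturbation estimate on the optimal plan when $\mathbf{S}'$ is close to $\mathbf{S}$. A secondary subtlety is the choice of cosine distance as $1-\cos$ versus the geodesic $\arccos$: the affine identity is exact only for the former, whereas for $\arccos$ one obtains merely a Lipschitz-type inequality that degrades near antipodal pairs, so I would prove the theorem for $1-\cos$ and record the $\arccos$ variant as an approximation with a bounded multiplicative constant on the relevant angular range.
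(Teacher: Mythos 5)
Your proposal is correct and follows essentially the same route as the paper's Appendix~A proof: fix the diagonal (natural) coupling $\pi_{ik}=\tfrac{1}{N}\delta_{ik}$ to collapse the quadruple sum, then use the affine identity $d(a,b)=1-\langle a,b\rangle$ so the distance differences reduce exactly to Gram-matrix entry differences, yielding $GW_p^p \le \tfrac{1}{N^2}\sum_{i,j}|S_{ij}-S'_{ij}|^p$. Your added remarks on the tightness of the diagonal coupling and on $1-\cos$ versus $\arccos$ go beyond what the paper establishes but do not alter the core argument.
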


\begin{proof}
The objective is to formally demonstrate that the Manifold Alignment Regularization (MAR) strategy, which aligns Gram matrices, minimizes a computationally tractable upper bound of the $p$-th order Gromov-Wasserstein (GW) distance between the feature manifolds of the original pre-trained model and the fine-tuned model.

\paragraph{1. Definition of Metric Spaces.}
We consider two discrete metric probability spaces, $(\mathcal{X}, d_{\mathcal{X}}, \mu)$ and $(\mathcal{Y}, d_{\mathcal{Y}}, \nu)$.

\begin{itemize}
    \item \textbf{Space $\mathcal{X}$}: Represents the feature space of the original, frozen model. For a mini-batch of $N$ samples, we extract a set of normalized features $Z = \{z_1, z_2, \dots, z_N\}$, where $z_i \in \mathbb{R}^d$.
    
    \item \textbf{Space $\mathcal{Y}$}: Represents the feature space of the fine-tuned model. For the same mini-batch, the corresponding set of normalized features is $Z' = \{z'_1, z'_2, \dots, z'_N\}$, where $z'_i \in \mathbb{R}^d$.
    
    \item \textbf{Metric $d$}: We employ the \textbf{Cosine Distance} as the metric. For a pair of normalized vectors $a$ and $b$, the distance is $d(a, b) = 1 - \langle a, b \rangle$. The intra-space distance matrices, $D_Z$ and $D_{Z'}$, are thus:
    \begin{align}
        (D_Z)_{ij} &= d(z_i, z_j) = 1 - \langle z_i, z_j \rangle \\
        (D_{Z'})_{ij} &= d(z'_i, z'_j) = 1 - \langle z'_i, z'_j \rangle
    \end{align}
    
    \item \textbf{Probability Distributions $\mu, \nu$}: For a discrete batch of $N$ samples, we assume a uniform probability distribution, i.e., $\mu = \nu = \frac{1}{N}\sum_{i=1}^N \delta_{i}$, where $\delta_i$ is the Dirac measure.
\end{itemize}

\paragraph{2. Gromov-Wasserstein Distance and the Concept of Coupling.}
The discrete $p$-th order GW distance is formulated as:
\begin{equation}
\small
GW_{p}(\mu, \nu) = \left( \min_{\pi \in \Pi(\mu, \nu)} \sum_{i,j=1}^{N} \sum_{k,l=1}^{N} |(D_Z)_{ij} - (D_{Z'})_{kl}|^p \pi_{ik} \pi_{jl} \right)^{1/p}
\end{equation}
Here, $\pi \in \Pi(\mu, \nu)$ is a \textbf{coupling}, which is a joint probability distribution over the product space $\mathcal{X} \times \mathcal{Y}$. Intuitively, a coupling can be understood as a probabilistic transportation plan that describes how to map or associate the points from space $\mathcal{X}$ (with distribution $\mu$) to the points in space $\mathcal{Y}$ (with distribution $\nu$). The GW distance seeks the optimal coupling $\pi^*$ that minimizes the expected difference between pairwise distances in the two spaces. Finding this optimal plan involves solving a quadratic assignment problem, which is computationally intractable (NP-hard) for non-trivial cases.

\paragraph{3. Simplification via a Fixed Coupling.}
To derive a computationally feasible upper bound, we forgo the optimization over all possible couplings and instead select a single, fixed coupling. As outlined in the paper, we adopt a natural coupling, $\pi_{\text{nat}}$, which assumes a one-to-one correspondence between the features of the same sample before and after fine-tuning. This coupling is formally defined as:
\begin{equation}
\pi_{\text{nat}, ik} =
\begin{cases}
    1/N & \text{if } i=k \\
    0   & \text{if } i \neq k
\end{cases}
\end{equation}
By definition, the GW distance is the minimum over all couplings. Therefore, using our specific $\pi_{\text{nat}}$ provides an upper bound on the true GW distance.

\paragraph{4. Derivation of the Upper Bound.}
We substitute our fixed coupling $\pi_{\text{nat}}$ into the $p$-th power of the GW distance formula:
\begin{equation}
GW_p(\mu, \nu)^p \le \sum_{i,j=1}^{N} \sum_{k,l=1}^{N} |(D_Z)_{ij} - (D_{Z'})_{kl}|^p \cdot \pi_{\text{nat}, ik} \pi_{\text{nat}, jl}
\end{equation}
Due to the structure of $\pi_{\text{nat}}$, the term $\pi_{\text{nat}, ik} \pi_{\text{nat}, jl}$ is non-zero only when $i=k$ and $j=l$, where it evaluates to $(1/N)(1/N) = 1/N^2$. This simplifies the quadruple summation into a double summation:
\begin{equation}
GW_p(\mu, \nu)^p \le \frac{1}{N^2} \sum_{i=1}^{N} \sum_{j=1}^{N} |(D_Z)_{ij} - (D_{Z'})_{ij}|^p
\end{equation}

\paragraph{5. Connection to Manifold Alignment Regularization (MAR).}
Let $S$ and $S'$ denote the Gram matrices (cosine similarity matrices). The absolute difference in distances can be expressed in terms of the Gram matrices:
\begin{align}
|(D_Z)_{ij} - (D_{Z'})_{ij}| =& |(1 - \langle z_i, z_j \rangle) - (1 - \langle z'_i, z'_j \rangle)| \nonumber \\=& |S'_{ij} - S_{ij}|
\end{align}
Substituting this back into our inequality, we arrive at the final upper bound:
\begin{equation}
GW_p(\mu, \nu)^p \le \frac{1}{N^2} \sum_{i=1}^{N} \sum_{j=1}^{N} |S'_{ij} - S_{ij}|^p
\end{equation}
This expression shows that the $p$-th power of the GW distance is upper-bounded by the scaled $L_p$-norm of the difference between the Gram matrices of the two feature spaces.

\paragraph{Conclusion.}
We have formally shown that minimizing the $L_p$-norm of the difference between Gram matrices corresponds to minimizing a tractable upper bound on the $p$-th power of the Gromov-Wasserstein distance. The Manifold Alignment Regularization (MAR) loss presented in the paper, $\mathcal{L}_{\text{MAR}}$, is a specific instance of this principle using the $L_1$-norm ($p=1$). This provides a strong theoretical justification for how aligning Gram matrices effectively preserves the geometric structure of the feature manifold during fine-tuning.

\end{proof}

\begin{table*}[!t]
    \centering
    \caption{Summary of 11 datasets for few-shot learning and 2 target datasets of domain generalization. The 7 selected templates \cite{tip-adapter} for ImageNet series datasets are ``itap of a [class].'', ``a bad photo of the [class].'', ``a origami [class].'', ``a photo of the large [class].'', ``a [class] in a video game.'', ``art of the [class].'' and ``a photo of the small [class].''
    }
    \begin{adjustbox}{max width = 1\textwidth}
    \begin{tabular}{lcccc}
        \toprule
        Name & Number of Classes & Size (Train / Val / Test) & Description & Template\\
        \midrule
        ImageNet \cite{ImageNet} & 1000 & 1.28M / - /50000 &  Recognition of generic objects & \multirow{3}{*}{Ensemble of 7 selected templates} \\
        ImageNet-V2 \cite{imagenetV2} & 1000 & - / - / 10000 & New test data for ImageNet & \\
        ImageNet-Sketch \cite{imagenetsketch} & 1000 & - / - / 50889 & Sketch-style images of ImageNet classes & \\
        \midrule
        Caltech101 \cite{Caltech} & 100 & 4128 / 1649 / 2465 & Recognition of generic objects & ``a photo of a [class].'' \\
        OxfordPets \cite{oxfordpets} & 37 & 2944 / 736 / 3669 & Fine-grained classification of pets & ``a photo of a [class], a type of pet.'' \\
        StanfordCars \cite{stanfordcars} & 196 & 6509 / 1635 / 8041 & Fine-grained classification of cars & ``a photo of a [class].'' \\
        Flowers102 \cite{flower102} & 102 & 4093 / 1633 / 2463 & Fine-grained classification of flowers & ``a photo of a [class], a type of flower.'' \\
        Food101 \cite{food101} & 101 & 50500 / 20200 / 30300 & Fine-grained classification of foods & ``a photo of a [class], a type of food.'' \\
        FGVCAircraft \cite{FGVCAircraft} & 100 & 3334 / 3333 / 3333 & Fine-grained classification of aircrafts &``a photo of a [class], a type of aircraft.'' \\
        SUN397 \cite{sun397} & 397 & 15880 / 3970 / 19850 & Scene classification & ``a photo of a [class].'' \\
        DTD \cite{DTD} & 47 & 2820 / 1128 / 1692 & Texture classification & ``[class] texture.'' \\
        EuroSAT \cite{EuroSAT} & 10 & 13500 / 5400 / 8100 & Land use \& cover classification with satellite images & ``a centered satellite photo of [class].'' \\
        UCF101 \cite{ucf101} & 101 & 7639 / 1898 / 3783 & Action recognition & ``a photo of a person doing [class].'' \\
        \bottomrule
    \end{tabular}
    \end{adjustbox}
    \label{tab:dataset_summary}
\end{table*}

\begin{table*}[!ht]
\centering
\renewcommand{\arraystretch}{1.3}
\caption{Sensitivity Study on MAR weight $\lambda_1$}

\resizebox{\textwidth}{!}{%
\begin{tabular}{l|cccccccccccc}
\toprule
 $\lambda_1 $& ImageNet & Caltech101 & OxfordPets & StanfordCars & Flowers102 & Food101 & FGVCAircraft & SUN397 & DTD & EuroSAT & UCF101 & Average \\
\midrule
0.01 & 74.80 & 97.33 & 93.67 & 90.70 & 99.23 & 86.27 & 68.43 & 77.87 & 76.53 & 94.20 & 89.17 & 86.20 \\
0.1 & 74.97 & \textbf{97.43} & 94.33 & 90.97 & 99.33 & 87.13 & 68.60 & 78.07 & 76.97 & 94.40 & 89.67 & 86.53 \\
0.2 & 74.97 & \textbf{97.43} & 94.37 & 90.90 & 99.33 & 87.20 & 68.83 & 78.07 & 77.10 & \textbf{94.43} & 89.80 & 86.58 \\
0.5 & 75.23 & \textbf{97.43} & 94.53 & \textbf{91.13} & 99.37 & 87.70 & \textbf{69.03} & 78.37 & 77.20 & 94.37 & \textbf{89.87} & 86.74 \\
1 & 75.43 & \textbf{97.43} & 94.70 & 90.97 & \textbf{99.40} & 88.00 & 68.83 & 78.43 & \textbf{77.30} & 93.37 & 89.63 & \textbf{86.76} \\
2 & \textbf{75.60} & 97.23 & \textbf{94.77} & 90.77 & 99.20 & \textbf{88.13} & 68.23 & \textbf{78.47} & 76.93 & 92.10 & 88.90 & 86.39 \\
5 & 75.47 & 96.93 & 94.37 & 90.03 & 98.83 & 88.03 & 66.50 & 78.13 & 76.33 & 91.30 & 88.20 & 85.83 \\
10 & 75.13 & 96.73 & 94.13 & 88.30 & 98.53 & 88.00 & 64.43 & 77.90 & 76.07 & 91.27 & 87.47 & 85.27 \\
\bottomrule
\end{tabular}%
}
\label{tab:MAR_sen}
\end{table*}

\begin{table*}[!ht]
\centering
\renewcommand{\arraystretch}{1.3}
\caption{Sensitivity Study on HMS weight $\lambda_2$}

\resizebox{\textwidth}{!}{%
\begin{tabular}{lcccccccccccc}
\toprule
 & ImageNet & Caltech101 & OxfordPets & StanfordCars & Flowers102 & Food101 & FGVCAircraft & SUN397 & DTD & EuroSAT & UCF101 & Average \\
\midrule
0.01 & 75.37 & 97.13 & 94.37 & 91.03 & 99.27 & 88.03 & 68.93 & 78.10 & \textbf{77.20} & 93.73 & 89.17 & 86.58 \\
0.1 & 75.60 & 97.37 & 94.77 & \textbf{91.13} & \textbf{99.37} & \textbf{88.13} & \textbf{69.03} & 78.47 & \textbf{77.20} & 94.37 & \textbf{89.87} & \textbf{86.85} \\
0.2 & \textbf{75.63} & \textbf{97.47} & \textbf{94.87} & 91.07 & 99.20 & 88.07 & 66.73 & \textbf{78.50} & 76.93 & 94.33 & 89.77 & 86.60 \\
0.3 & 75.60 & 97.27 & 94.73 & 91.00 & 99.13 & 87.87 & 66.93 & 78.30 & 76.97 & \textbf{94.40} & 89.63 & 86.53 \\
0.5 & 75.50 & 95.27 & 94.57 & 90.67 & 99.10 & 87.07 & 67.63 & 77.77 & 76.80 & 94.27 & 89.33 & 86.18 \\
0.8 & 74.90 & 93.70 & 94.13 & 88.73 & 98.80 & 86.57 & 63.37 & 66.33 & 75.77 & 93.63 & 88.23 & 84.02 \\
1 & 74.90 & 96.37 & 93.93 & 88.17 & 98.27 & 86.33 & 50.83 & 67.87 & 75.83 & 94.17 & 88.07 & 83.16 \\
2 & 69.07 & 91.93 & 93.27 & 80.03 & 86.60 & 84.77 & 24.43 & 64.53 & 43.20 & 87.30 & 83.83 & 73.82 \\
\bottomrule
\end{tabular}%
}
\label{tab:HMS_sen}
\end{table*}

\begin{table*}[!ht]
\renewcommand{\arraystretch}{1.3}
\centering
\caption{Sensitivity Study on logits weight $\alpha$}

\resizebox{\textwidth}{!}{%
\begin{tabular}{lcccccccccccc}
\toprule
 & ImageNet & Caltech101 & OxfordPets & StanfordCars & Flowers102 & Food101 & FGVCAircraft & SUN397 & DTD & EuroSAT & UCF101 & Average \\
\midrule
0.1 & 74.07 & 96.97 & 94.43 & 88.30 & 95.50 & 87.90 & 58.33 & 75.60 & 71.47 & 91.70 & 87.60 & 83.81 \\
0.2 & 75.43 & 97.27 & \textbf{94.77} & 90.83 & 99.23 & 88.10 & 68.00 & 78.07 & 76.83 & 93.87 & 89.40 & 86.53 \\
0.3 & 75.60 & 97.37 & \textbf{94.77} & \textbf{91.13} & \textbf{99.37} & \textbf{88.13} & 69.03 & 78.47 & 77.20 & \textbf{94.37} & 89.87 & \textbf{86.85} \\
0.4 & \textbf{75.67} & 97.40 & 94.63 & \textbf{91.13} & \textbf{99.37} & 88.07 & 68.87 & 78.57 & \textbf{77.30} & 94.20 & \textbf{89.90} & 86.83 \\
0.5 & 75.57 & 97.37 & 94.57 & \textbf{91.13} & 99.33 & 87.87 & 68.97 & \textbf{78.67} & 77.17 & 94.17 & 89.80 & 86.78 \\
0.6 & 75.50 & 97.33 & 94.30 & 90.97 & 99.27 & 87.63 & 68.87 & 78.60 & 77.17 & 94.00 & 89.57 & 86.65 \\
0.7 & 75.33 & 97.40 & 93.63 & 90.70 & 99.13 & 87.07 & \textbf{69.17} & 78.50 & 76.60 & 94.13 & 89.27 & 86.45 \\
0.8 & 75.00 & \textbf{97.47} & 92.97 & 89.80 & 98.97 & 86.33 & 67.90 & 78.27 & 76.40 & 93.20 & 88.93 & 85.93 \\
0.9 & 74.57 & 97.20 & 91.37 & 89.33 & 99.00 & 84.93 & 67.73 & 77.83 & 76.63 & 93.00 & 88.57 & 85.47 \\
1.0 & 74.00 & 97.07 & 89.00 & 88.33 & 98.80 & 83.03 & 67.10 & 77.20 & 75.87 & 93.37 & 87.87 & 84.69 \\
\bottomrule
\end{tabular}%
}
\label{tab:sen_alpha}
\end{table*}

\section{B. Hyperparameter Settings}
\label{sec:appendix_hyperparams}

This section provides a detailed overview of the hyperparameter settings used in our experiments for MPS-Tuning. The primary hyperparameter values, which serve as the default for most datasets, are presented in Table~\ref{tab:hyperparameters}. All experiments were conducted using the CLIP ViT-B/16 as the base model and were averaged over three different random seeds to ensure the reliability of our results.

While most parameters were kept consistent to ensure a fair evaluation, certain key hyperparameters were adjusted for specific datasets to optimize performance. Specifically, the weight for the Manifold Alignment Regularization loss ($\lambda_1$) was increased to 2.0 for datasets with natural images, namely ImageNet, OxfordPets, Food101, and SUN397, to enforce stronger preservation of the rich pre-trained manifold. For all other datasets, the default value of 0.5 was used. Similarly, the batch size was set to 64 for the large-scale ImageNet dataset to ensure stable gradient estimation, while a batch size of 32 was used for all other datasets.

Furthermore, we employed a dynamic temperature scheduling for the HMS loss ($\tau'$) using a cosine annealing strategy over the training epochs. For datasets including OxfordPets, Food101, DescribableTextures, EuroSAT, and UCF101, the temperature was annealed from an initial value of 0.5 down to 0.07. For the remaining datasets, a more conservative schedule from 0.1 to 0.05 was applied.

To mitigate the risk of overfitting caused by excessive category-specific supervision on intermediate layers, we incorporate a simple layer-wise decay scheme in HMS. Concretely, the final layer is assigned a weight of 1, and the weight of each preceding layer is defined recursively as half that of its subsequent layer:
\begin{equation}
    w_L = 1,\quad w_l = \frac{1}{2} w_{l+1} \quad \text{for } l = L-1, L-2, \dots, 1,
\end{equation}
where $L$ denotes the total number of layers.

Regarding data augmentation, we followed the standard protocol used in CoOp. This includes `RandomResizedCrop' with a scale range of (0.3, 1.0) and random horizontal flipping. No other complex augmentations were used.

\begin{table}[h!]
\centering
\setlength{\tabcolsep}{12pt}
\caption{Default hyperparameter settings used for training MPS-Tuning.}
\label{tab:hyperparameters}
\renewcommand{\arraystretch}{1.2} %
\begin{tabular}{lc}
\toprule
\textbf{Hyperparameter} & \textbf{Value} \\
\midrule
Optimizer & SGD \\
Batch Size & 32 \\
Total Epochs & 50 \\
Peak Learning Rate & 0.002 \\
LR Scheduler & Cosine Decay \\
Logits Weight $\alpha$ & 0.3 \\
\midrule
MAR Loss Weight $\lambda_1$ & 0.5 \\
HMS Loss Weight $\lambda_2$ & 0.1 \\
HMS Depth & 2 \\
\bottomrule
\end{tabular}
\end{table}

\section{C. Trainable Modules}
Benefiting from the powerful regularization capacity of MAR, direct fine-tuning of pre-trained models becomes feasible in few-shot scenarios. Specifically, a hierarchical fine-tuning strategy is employed for the visual encoder. The modules in the ViT/B-16 backbone are grouped based on semantic hierarchy~\cite{gandelsman2024interpreting}, with every four layers forming a group. The first group remains frozen during training to retain general representation learning. In the second group, each Transformer block is paired with a zero-initialized linear layer operating in parallel. Inputs are processed by both branches, and their outputs are summed to allow lightweight adjustments in intermediate representations. The third group is fully fine-tuned to facilitate adaptation to downstream tasks.

\section{D. Datasets}
\label{app:datasets_clip}
In the main text, our method was assessed on the widely adopted CLIP Benchmark, in alignment with previous work~\cite{CoOp,tip-adapter,taskres}. The benchmark comprises 11 diverse datasets, including ImageNet~\cite{ImageNet}, Caltech101~\cite{Caltech}, Oxford Pets~\cite{oxfordpets}, Stanford Cars~\cite{stanfordcars}, Flowers102~\cite{flower102}, Food101~\cite{food101}, FGVCAircraft~\cite{FGVCAircraft}, SUN397~\cite{sun397}, DTD~\cite{DTD}, EuroSAT~\cite{EuroSAT}, and UCF101~\cite{ucf101}. These datasets span a broad range of image classification scenarios, encompassing general object recognition, fine-grained object recognition, scene recognition, texture recognition, and satellite imagery analysis. To ensure consistency with previous work~\cite{CoOp, tip-adapter, taskres}, the ``BACKGROUND Google" and ``Faces easy" classes were excluded from the Caltech101 dataset. 
Additionally, robustness under domain shift was analyzed using two ImageNet variants: ImageNet-V2~\cite{imagenetV2}, containing 200 overlapping classes, and ImageNet-Sketch~\cite{imagenetsketch}, encompassing 1,000 classes identical to ImageNet. Consistent with earlier works, ImageNet was used as the source dataset, while the two variants served as target datasets.
An overview of these datasets is presented in \cref{tab:dataset_summary}.

\section{E. Numerical Results}
Comparative evaluations were conducted across 11 benchmark datasets against state-of-the-art methods, including CoOp~\cite{CoOp}, Tip-Adapter-F~\cite{tip-adapter}, PLOT++~\cite{plot}, MaPle~\cite{maple}, PromptSRC~\cite{promptsrc}, AMU-Tuning~\cite{tang2024amu}, TCP~\cite{yao2024tcp}, DePT~\cite{zhang2024dept}, GalLop~\cite{lafon2024gallop}, TextRefiner~\cite{xie2024textrefiner}, MMRL~\cite{guo2025mmrl}, SkipT~\cite{wu2025skip}, LDC~\cite{li2025logitsdeconfusion}, and TAC~\cite{TAC}. Our approach achieved the highest average performance under all few-shot settings (1, 2, 4, 8, and 16 shots), with its advantage becoming increasingly pronounced as more samples were introduced, underscoring its robust learning capability.

\section{F. Sensitivity Study}
We conducted sensitivity analyses for the weights of MAR ($\lambda_1$), HMS ($\lambda_2$), and the logits from the fine-tuning branch ($\alpha$), with the corresponding results shown in Tables \cref{tab:MAR_sen}, \cref{tab:HMS_sen}, and \cref{tab:sen_alpha}, respectively. Regarding $\lambda_1$, we found that setting it to 0.5 or 1 yields the best performance when a fixed value is applied across all datasets. To further improve performance, we divided the datasets into two groups and set $\lambda_1$ to 0.5 and 2 for each group (see Hyperparameter Settings), respectively. As for $\lambda_2$, we fixed it at 0.1 across all datasets, and similarly, $\alpha$ was set to 0.3 for all datasets. 
Notably, the model’s performance exhibited minimal variation when these hyperparameters were adjusted around their default values, demonstrating the robustness of our method to hyperparameter choices.

\begin{figure}
    \centering
    \includegraphics[width=0.45\textwidth]{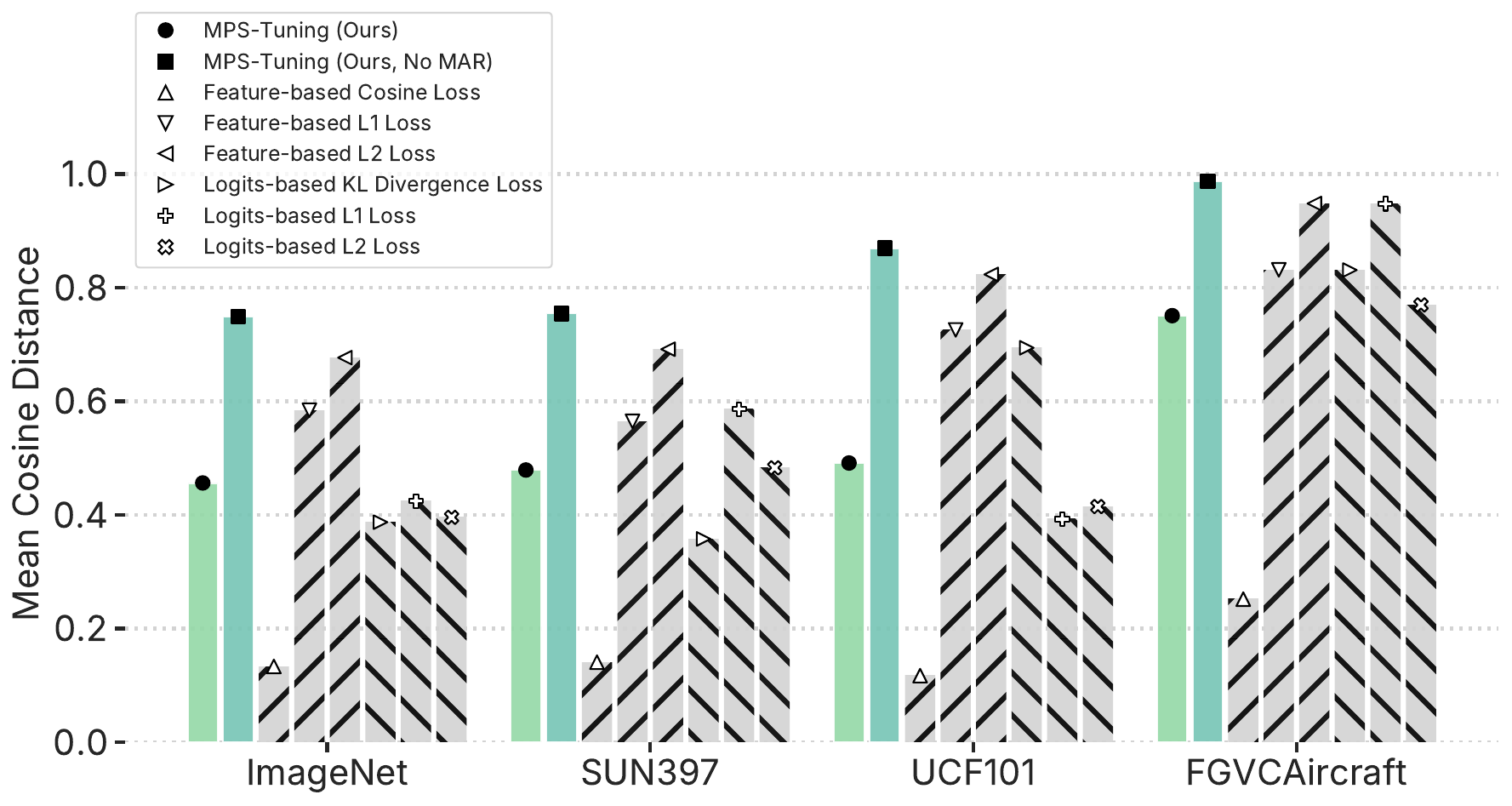}
    \caption{
    Cosine-Based Quantification of Feature Shift Induced by Diverse Consistency Constraints. A lower score indicates a stricter constraint on the output features.
    }
    \label{fig:cosine_dist}
\end{figure}

\begin{figure}
    \centering
    \includegraphics[width=0.45\textwidth]{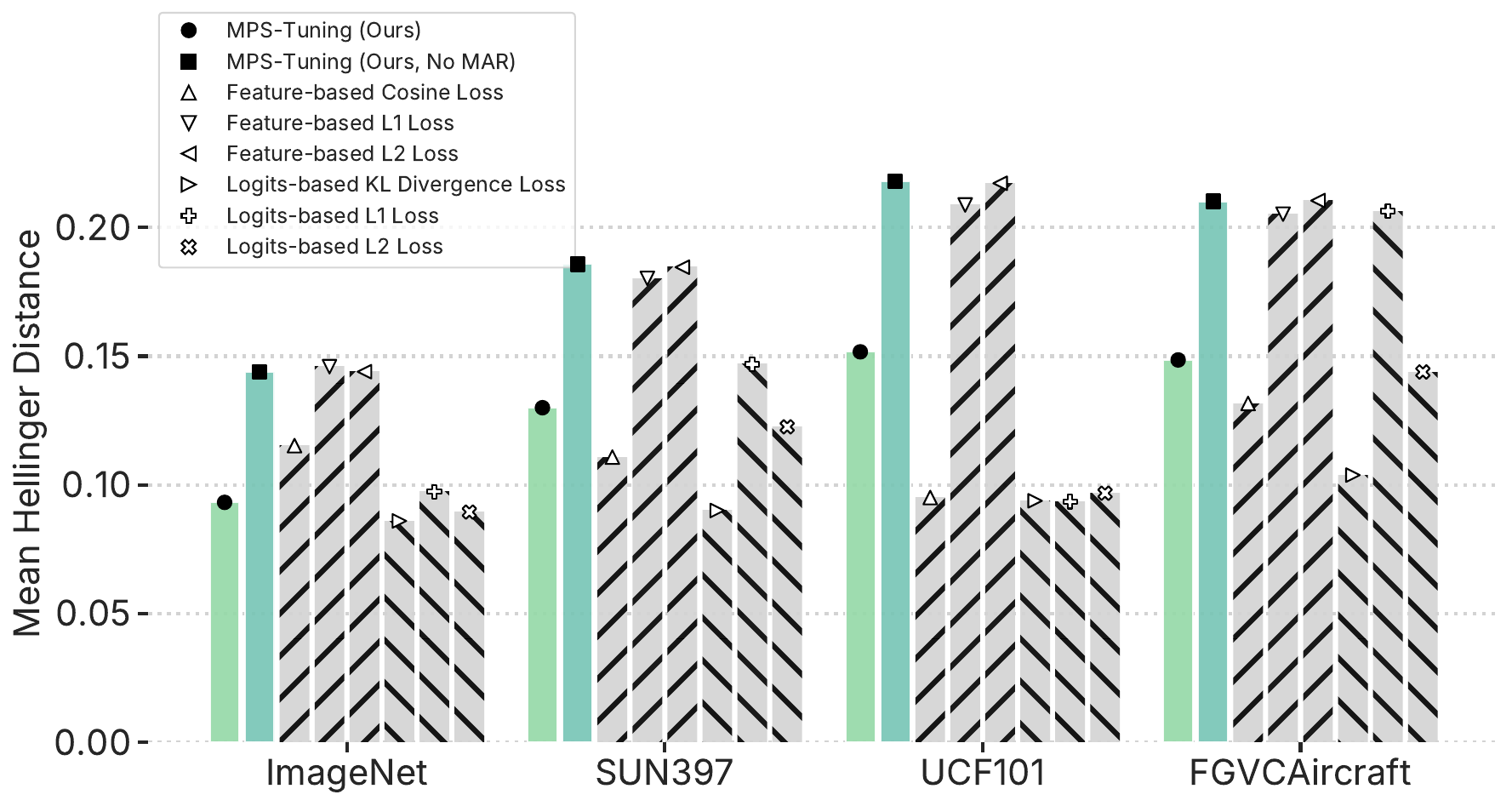}
    \caption{
    Hellinger-Based Metric for Quantifying Logits Shift Across Consistency Constraints. A lower score indicates a stricter constraint on the predicted outcomes. 
    }
    \label{fig:hellinger}
\end{figure}

\begin{figure}
    \centering
    \includegraphics[width=0.45\textwidth]{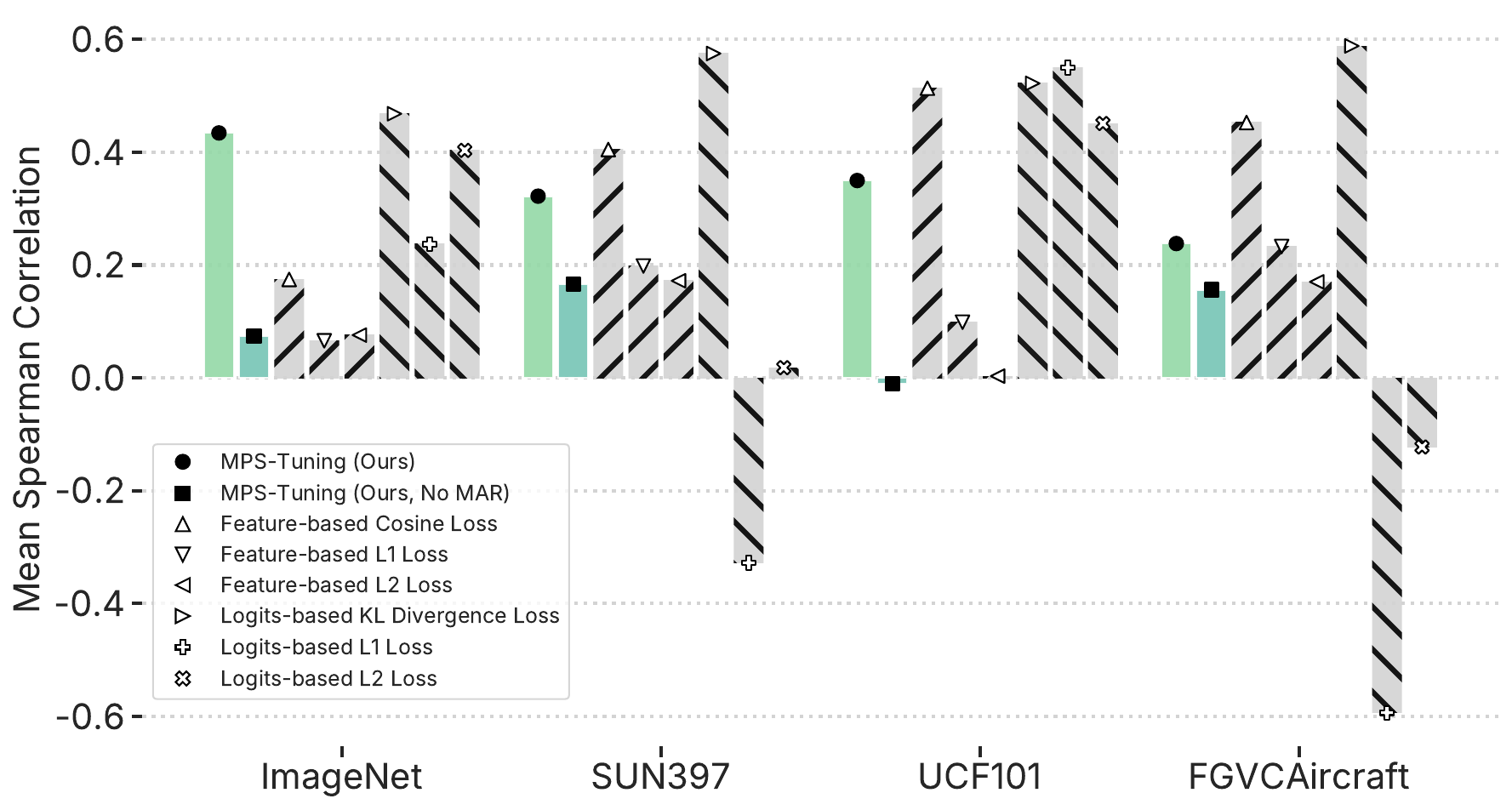}
    \caption{
    Spearman-Based Metric for Quantifying Logits Shift Across Consistency Constraints. A higher score indicates a stricter constraint on the predicted outcomes.
    }
    \label{fig:spearman}
\end{figure}

\begin{figure}
    \centering
    \includegraphics[width=0.45\textwidth]{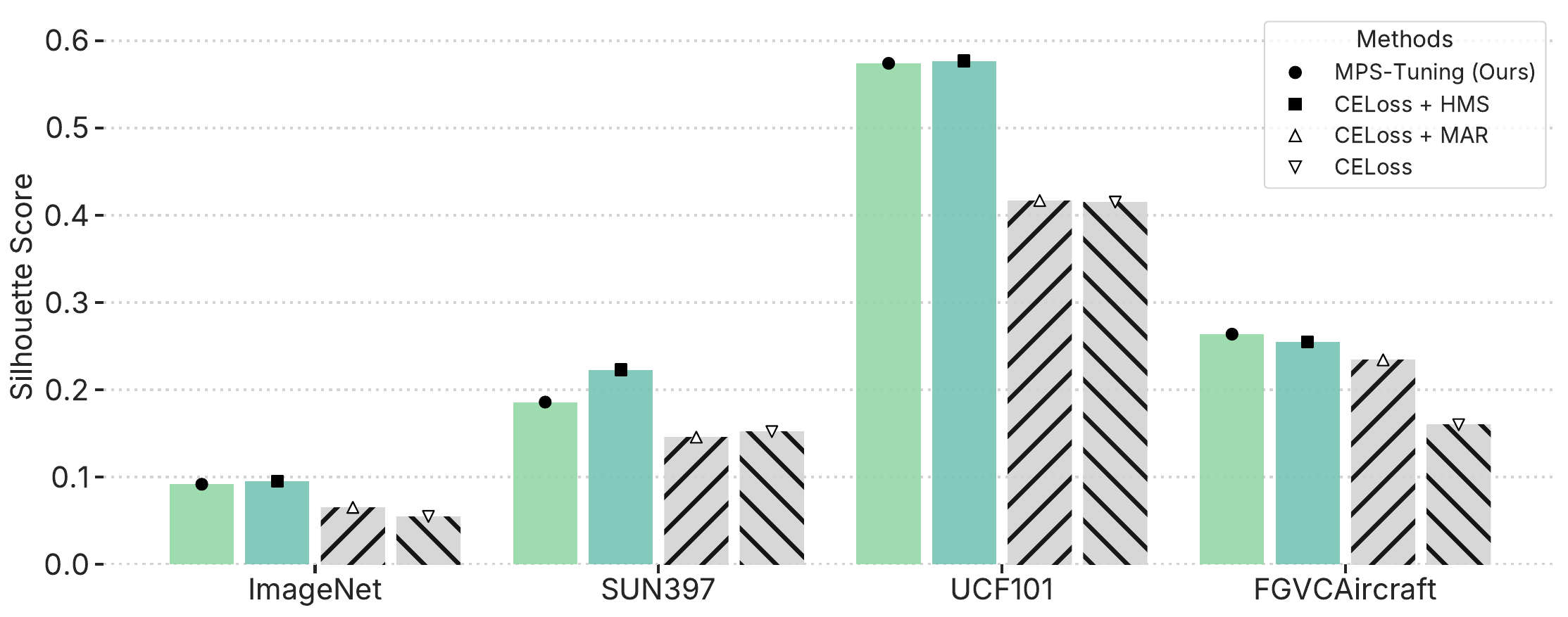}
    \caption{
    Impact of MAR and HMS on Feature Discrimination using Silhouette Coefficient. A higher score indicates better clustering performance.
    }
    \label{fig:silhouette}
\end{figure}

\begin{figure*}[htbp]
    \centering
    
    \begin{subfigure}{0.9\textwidth}
        \centering
        \includegraphics[width=\linewidth]{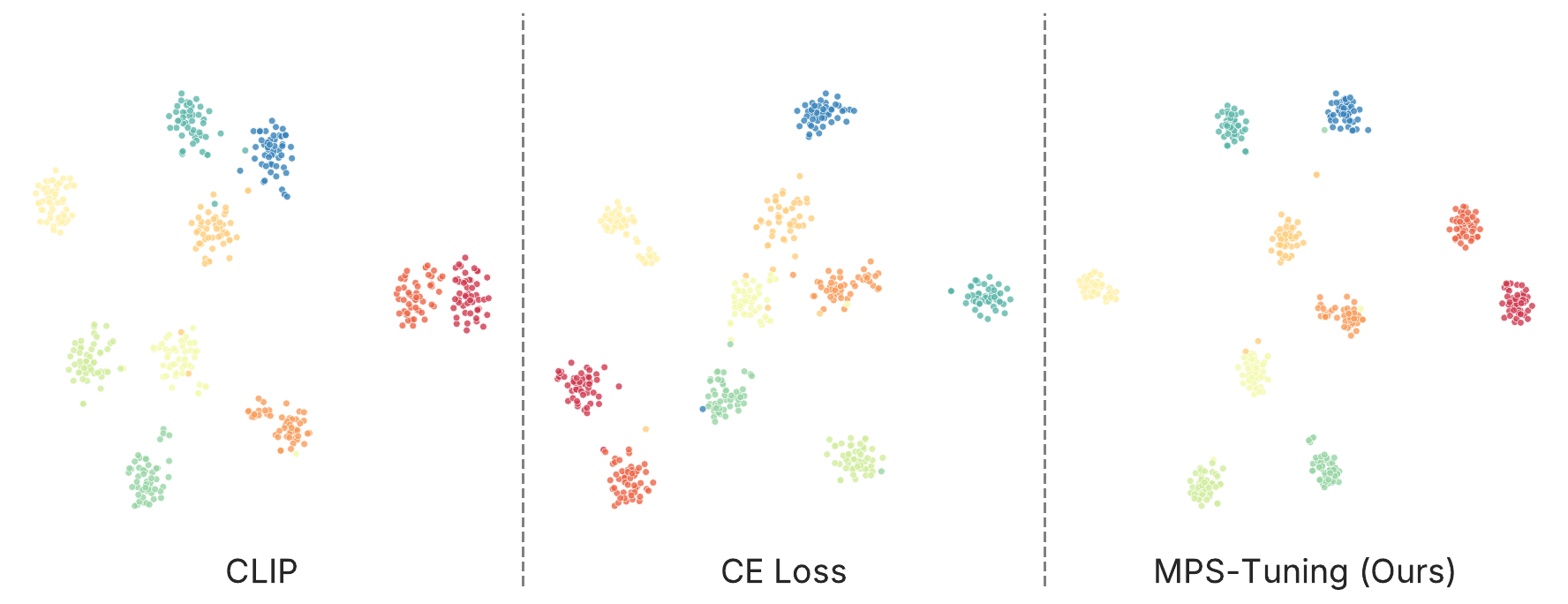}
        \caption{
        ImageNet
        }
        \label{fig:tsne_imagenet}
    \end{subfigure}

    \begin{subfigure}{0.9\textwidth}
        \centering
        \includegraphics[width=\linewidth]{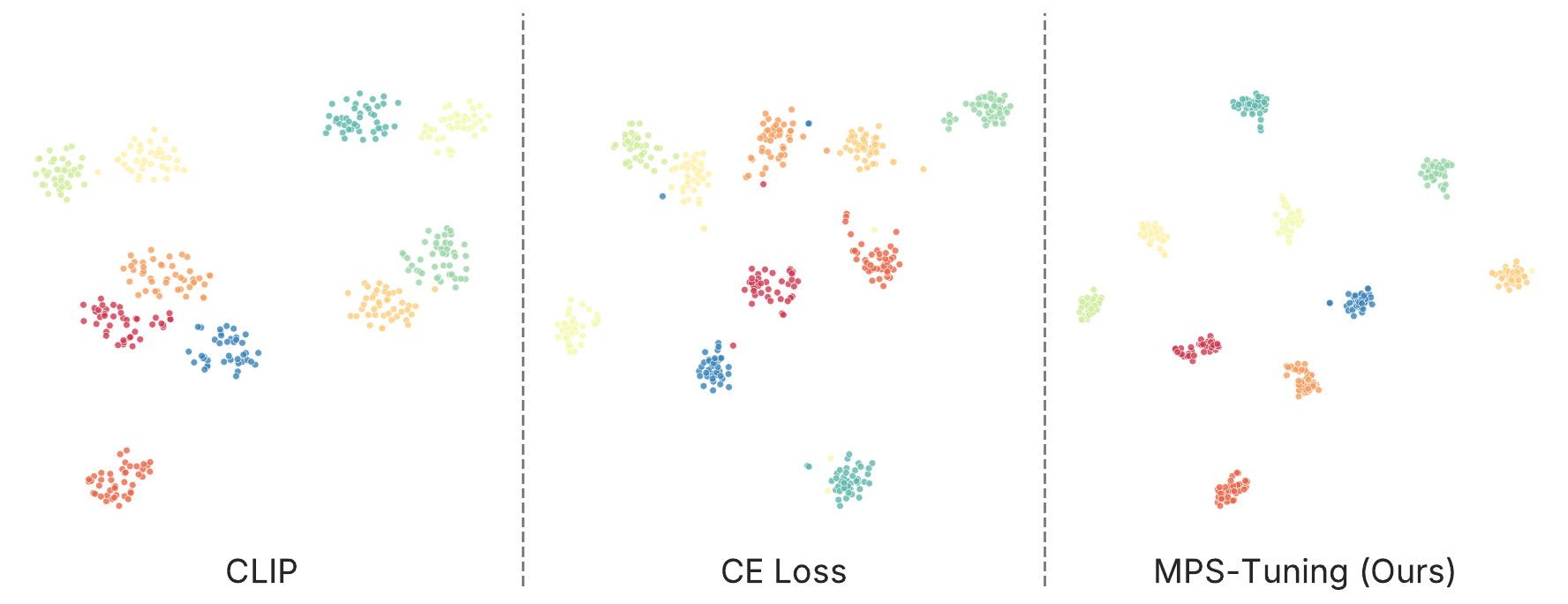}
        \caption{
        StanfordCars
        }
        \label{fig:tsne_cars}
    \end{subfigure}

    \begin{subfigure}{0.9\textwidth}
        \centering
        \includegraphics[width=\linewidth]{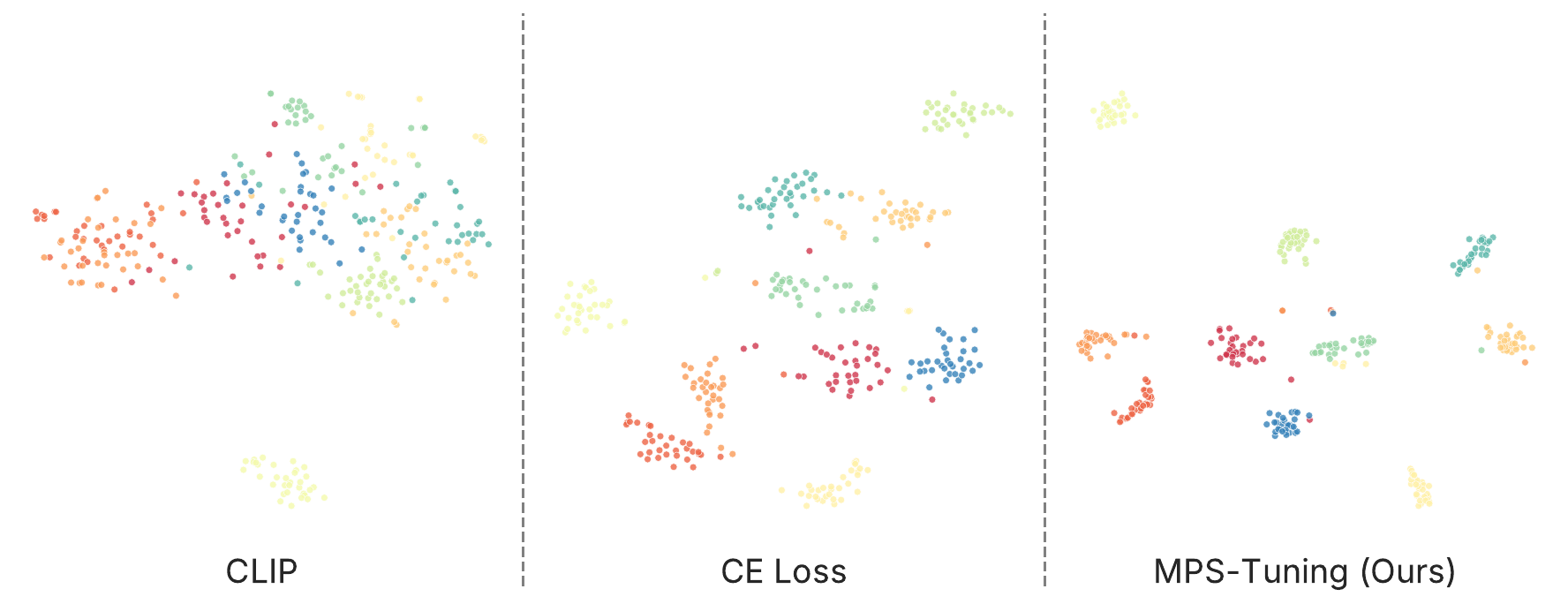}
        \caption{
        FGVCAircraft
        }
        \label{fig:tsne_fgvc}
    \end{subfigure}

    \caption{
    The t-SNE visualization, with each color denoting a distinct class.
    }
    \label{fig:tsne_res}
\end{figure*}

\section{G. More Interpretation Results}
\subsection{Impact of Consistency Constraints on Model Adaptation}
To substantiate the potential limitations of different consistency constraints discussed in the interpretability section of the main text, we conducted quantitative experiments analyzing changes in feature representations and logits distributions before and after fine-tuning.

\paragraph{1. Feature-Level Analysis}
We evaluated the cosine distance between model output features before and after fine-tuning across multiple datasets. In CLIP's normalized feature space used for similarity computation, the difference between two vectors is measured by their angular separation. Since angular alignment is equivalent to vector alignment in this context, a larger cosine distance between pre- and post-fine-tuning vectors indicates greater overall divergence. As shown in \cref{fig:cosine_dist}, cosine similarity constraints result in minimal angular changes between pre- and post-fine-tuning models, even in cross-domain scenarios (e.g., FGVCAircraft, where models typically require substantial adjustments for effective downstream task adaptation). This demonstrates that cosine similarity imposes more stringent restrictions on feature variations compared to other feature-based constraints.

\paragraph{2. Logits-Level Analysis}
We assessed the variation in prediction probabilities using Hellinger Distance and the consistency of prediction rankings via Spearman’s Rank Correlation, comparing models before and after fine-tuning. The former quantifies differences between two probability distributions, while the latter assesses the extent to which fine-tuned models maintain prediction ranking consistency with original models. 
The experimental results documented in \cref{fig:hellinger} and \cref{fig:spearman} show that models constrained by feature-level cosine similarity and logits-level KL divergence demonstrate markedly superior alignment with original model predictions compared to their L1 and L2 constrained counterparts. This superiority manifests through consistently reduced Hellinger Distance values and elevated Spearman's Rank Correlation coefficients, with the most pronounced differentiation occurring on the cross-domain FGVCAircraft dataset. 
These results confirm that cosine similarity and KL divergence heavily constrain model predictions.

These findings indicate that cosine similarity and KL divergence may impose overly rigid constraints, potentially hampering the model’s learning capacity. In contrast, our proposed Manifold Alignment Regularization (MAR) offers a dynamic constraint mechanism that adapts across datasets, enabling models to engage in further learning when knowledge acquisition is necessary while effectively preserving pre-trained knowledge when high consistency exists between pre-trained and downstream task-required knowledge, thereby significantly enhancing model learning capability.

\subsection{Impact of MAR and HMS on Feature Discrimination}
We further employed a common clustering metric, the Silhouette Coefficient, to evaluate the class separability of features in the representation space across different model components. As shown in \cref{fig:silhouette}, using MAR alone improves class separability in certain scenarios, while HMS alone significantly enhances the distinction between categories. The combined use of MAR and HMS yields results comparable to using HMS alone, indicating the effectiveness of our approach in facilitating discriminative feature learning.

\section{H. Visualization}
We visualize the features using t-SNE on the ImageNet, StanfordCars, and FGVCAircraft datasets under the 16-shot setting. As illustrated in \cref{fig:tsne_res}, MPS-Tuning yields superior intra-class compactness and inter-class separability compared to both the non-fine-tuned model and Cross-Entropy loss (CEloss) fine-tuning. Notably, in scenarios where the original CLIP model performs well (e.g., ImageNet and StanfordCars), MPS-Tuning more effectively preserves the original semantic structure and inter-class relationships than CE loss tuning. In contrast, for datasets where CLIP underperforms (e.g., FGVCAircraft), the global semantic structure is largely retained, while local adjustments facilitate improved classification.

\end{document}